\newtheorem{theorem}{Theorem}
\newtheorem{lemma}{Lemma}
\newtheorem{definition}{Definition}
\newtheorem{assumption}{Assumption}
\newtheorem{corollary}{Corollary}
\begin{document}

%

%

\twocolumn[

\aistatstitle{Spectral Alignment as Predictor of Loss Explosion in Neural Network Training}

\aistatsauthor{ Haiquan Qiu$^1$, You Wu$^1$, Yingjie Tan$^1$, Yaqing Wang$^2$, Quanming Yao$^1$}

\aistatsaddress{ $^1$Tsinghua University \\ $^2$Beijing Institute of Mathematical Sciences and Applications \\
\texttt{\{qhq22, wu-you21, tanyj23\}@mails.tsinghua.edu.cn,} \\
\texttt{wangyaqing@bimsa.cn, qyaoaa@tsinghua.edu.cn}
} 
]

\begin{abstract}
Loss explosions in training deep neural networks can nullify multi-million dollar training runs. Conventional monitoring metrics like weight and gradient norms are often lagging and ambiguous predictors, as their values vary dramatically across different models and even between layers of the same model, making it difficult to establish a unified standard for detecting impending failure. We introduce Spectral Alignment (SA), a novel, theoretically-grounded metric that monitors the distributional alignment between layer inputs and the principal singular vectors of weight matrices. We show that a collapse in the sign diversity of this alignment is a powerful early predictor of representational collapse and training divergence. Empirical results on language models demonstrate that monitoring the SA distribution provides a significantly earlier and clearer warning of loss explosions than traditional scalar metrics. SA's low computational overhead makes it a practical tool for safeguarding model training.
\end{abstract}

\section{Introduction}

\begin{figure}[t]
    \centering
    \includegraphics[width=0.5\textwidth]{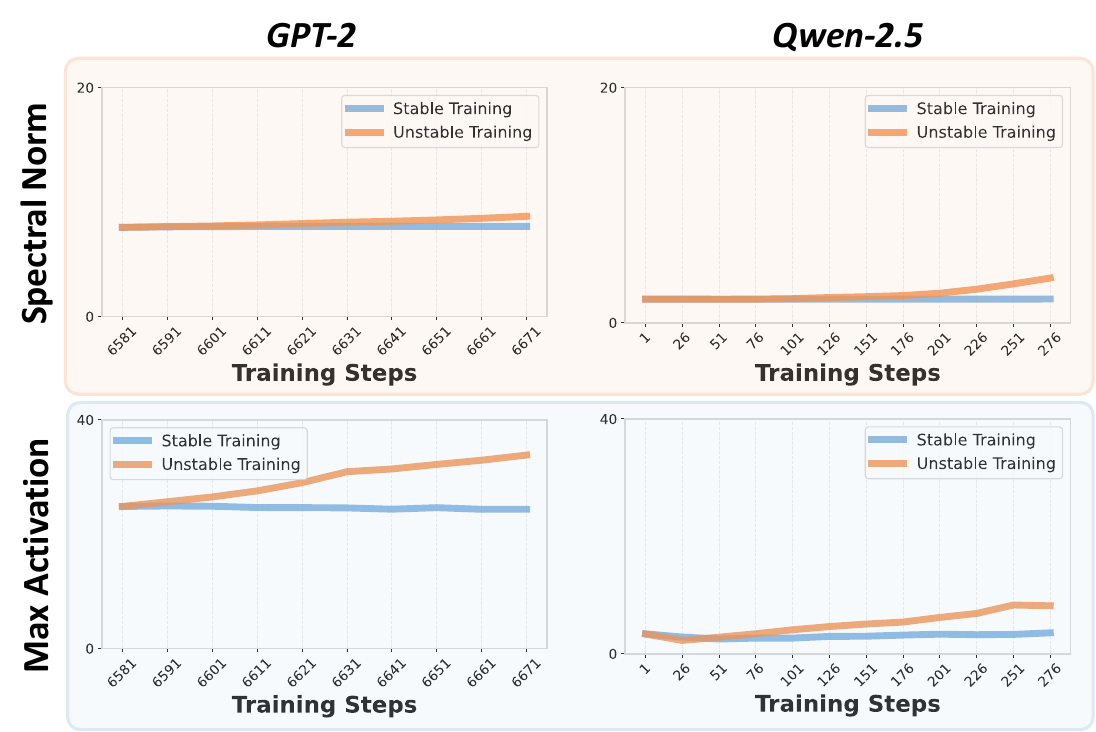}
    \vspace{-10px}
    \caption{Some conventional metrics, such as weight norm and maximal activation. This figure shows two models investigated in our paper, which shows the inconsistent patterns for conventional metrics across different models. This makes it is difficult to establish a universal threshold for loss explosion prediction.}
    \label{fig:intro_fig}
\end{figure}

The pursuit of ever-more capable artificial intelligence has led to the era of large-scale models, where training a single model can consume millions of dollars in computational resources and weeks of time \citep{brown2020language,touvron2023llama}. In this high-stakes environment, loss explosions are catastrophic failures. A sudden divergence, often occurring after days of seemingly successful training, can nullify the entire investment. These instabilities frequently arise from the interplay between modern training techniques and hyperparameter choices. For instance, the widespread adoption of low-precision formats like bfloat16 to accelerate training and reduce memory usage introduces a trade-off with numerical stability; the limited dynamic range can lead to underflow or overflow, causing errors that accumulate and trigger sudden divergence \citep{micikevicius2017mixed}. Similarly, aggressive hyperparameter settings, such as a high learning rate, can cause training to overshoot stable regions of the loss landscape and spiral out of control~\citep{wortsman2023small}. The ability to predict such failures before they manifest is one of the most pressing practical challenges in modern deep learning. Yet, finding reliable methods for monitoring training stability remains an open issue.

Traditional methods for monitoring loss explosions, such as tracking the norm of gradients or weights\citep{takase2025spike,fishman2024scaling,zhai2023sreparam}, stable rank~\citep{Sanyal2020Stable}, or the magnitude of activations~\citep{rybakov2024methods,wortsman2023small}, offer scalar summaries of the training process. However, these indicators have significant limitations. A primary issue is that their raw values are not directly comparable; they can vary dramatically not only between different models but also across various layers within the same model (see \cref{fig:intro_fig}).
This variance makes it difficult to establish a universal threshold that reliably signals an impending failure. Moreover, the growth rates of these metrics are often ambiguous. A rising norm, for instance, could indicate healthy feature learning rather than nascent failure. Distinguishing between a normal rate of increase and a precursor to a training crash is a complex, often model-specific task. Consequently, these metrics are often lagging predictors, revealing a problem only after it has become critical, rather than serving as a robust, predictive tool.

Our approach is built on the theoretical insight that feature learning occurs as layer inputs align with the principal singular vectors of their corresponding weight matrices \citep{yang2022tensor,yang2023spectral,imani2021representation}. While this alignment drives feature learning, we argue that its \emph{distributional character} across a batch of inputs is the key to healthy training. A healthy layer should respond discriminatively, with its principal singular vector aligning positively with some inputs and negatively with others. We posit that a collapse in this sign diversity—where nearly all inputs are pushed in the same direction—is a powerful early-warning signal. This loss of diversity indicates the onset of representational collapse via a dangerous amplification loop, a precursor to the loss explosions.

\begin{figure*}
\subfigure[Spectral alignment]{\includegraphics[width=0.3\textwidth]{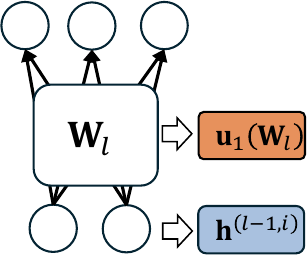}\label{fig:sa_concept}}
\subfigure[Stable Loss]{\includegraphics[width=0.3\textwidth]{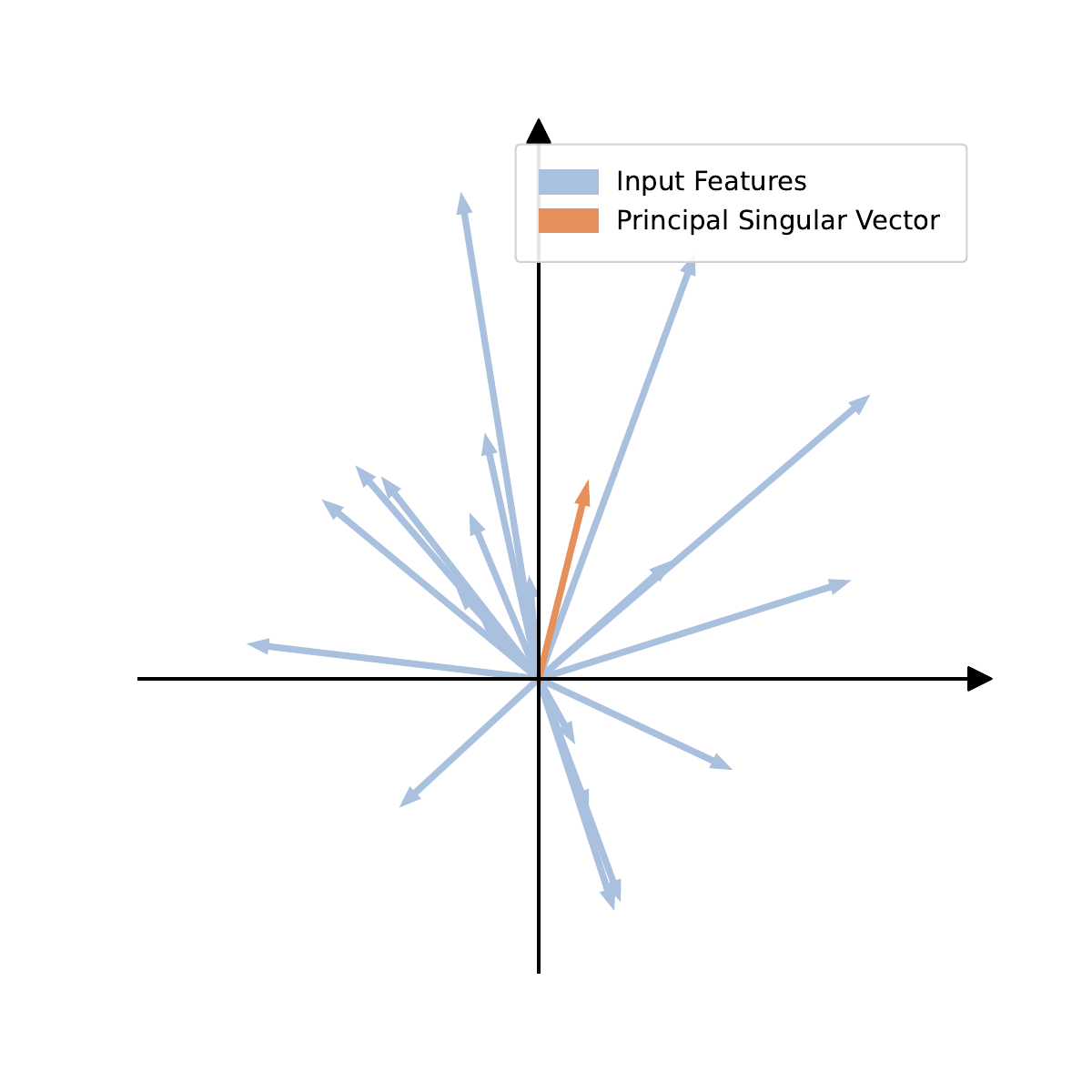}\label{fig:stable_sa}}
\subfigure[Exploding Loss]{\includegraphics[width=0.3\textwidth]{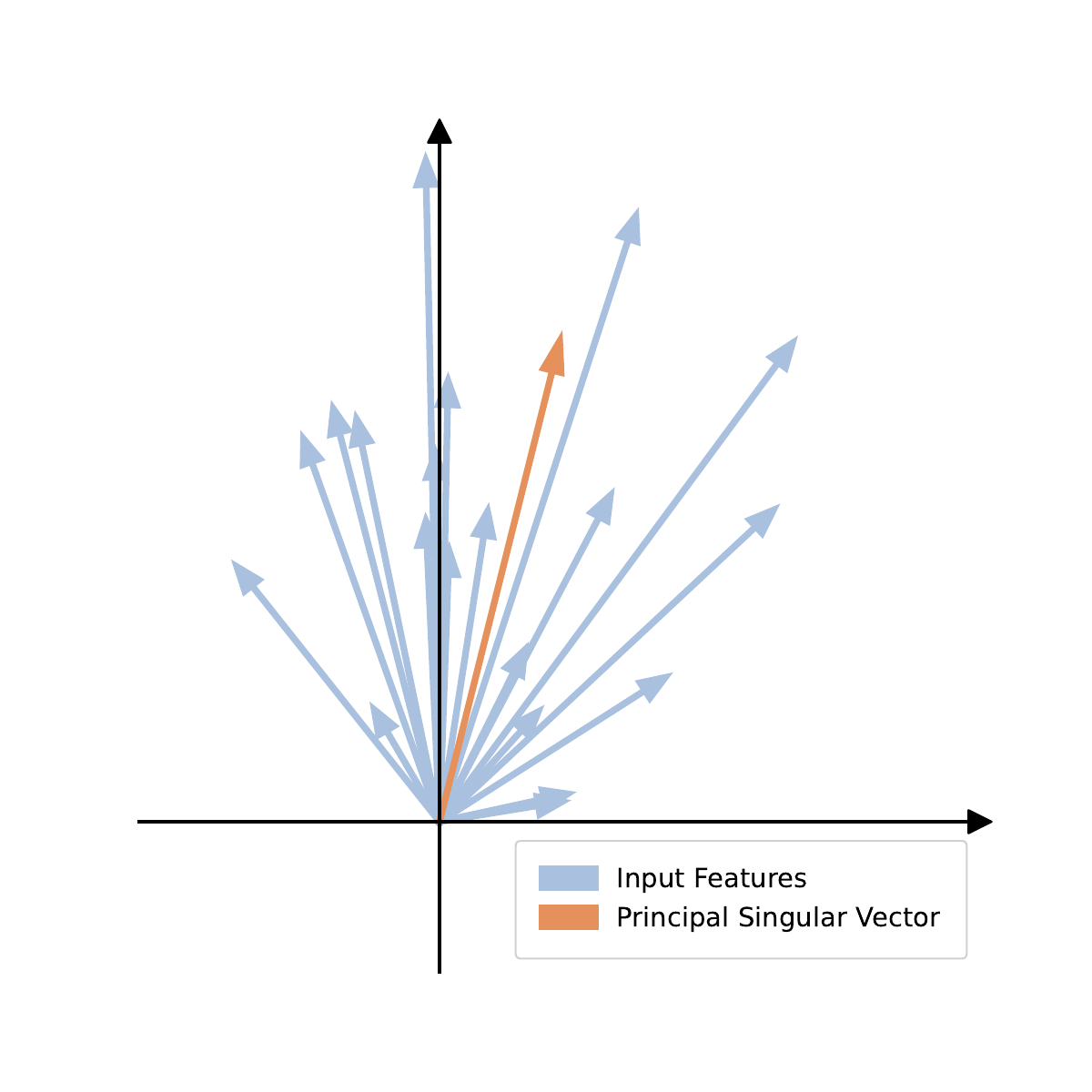}\label{fig:unstable_sa}}
\caption{Conceptual illustration of Spectral Alignment (SA). (a) SA measures the alignment between input features and the principal singular vector. (b) During stable training, inputs show diverse alignment directions. (c) Impending failure is signaled by a collapse in this diversity, with most inputs aligning in the same direction.}
\label{fig:conceptual_sa}
\end{figure*}

Therefore, we introduce the \emph{Spectral Alignment} to monitor the sign diversity. By calculating the spectral alignment of a batch of data, we obtain a distribution of alignment values that reflect how diversely the layer processes different inputs. Analyzing this distribution provides a clear picture of training stability: a balanced distribution around zero indicates a healthy layer, while a sudden collapse to one side signals impending training instability. Our contributions are as follows:

\begin{itemize}[leftmargin=*]
    \item We propose the spectral alignment and monitor its distribution as a proactive, mechanistically-grounded predictor for training instability.
    \item We provide the theoretical motivation for spectral alignment, connecting the loss of sign diversity in spectral alignment to the mechanisms of spectral norm increasing and training divergence.
    \item We demonstrate empirically on language models that spectral alignment detects nascent training failures significantly more clearly and earlier than conventional metrics, providing a crucial window for intervention.
\end{itemize}

\section{Related Work}
\subsection{Monitoring Training Stability in Deep Neural Networks}

The challenge of preventing training collapse has led to the development of various diagnostic metrics, 
most of which track scalar summaries of the training state. 

A primary category of indicators involves monitoring activations and norms,
as their explosive growth often precedes training failure. 
For instance, \citet{rybakov2024methods} observe that in a diverging Transformer, 
the norm of linear layer outputs can more than double relative to a stable run. 
Similarly, 
\citet{takase2025spike} find that catastrophic loss spikes are preceded by a sudden, 
uncontrolled growth in the global gradient norm.
Beyond monitoring norms, 
some studies also identify maximum activation values as predictive indicators. 
\citet{wortsman2023small} predict large model instability by monitoring 
the maximum value of attention logits, 
while \citet{fishman2024scaling} consider outlier activation spikes 
that appear late in training as a direct manifestation of training failure.

Other work has focused on more structural properties of the weight matrices, 
such as the stable rank (the ratio of the squared Frobenius norm to the squared spectral norm), 
which serves as a continuous proxy for matrix rank~\citep{Sanyal2020Stable}.
\citet{zhai2023sreparam} introduce attention entropy as a proxy for the stability of the attention mechanism, 
theoretically linking it to the spectral norms of the query and key projection matrices. 
Furthermore, \citet{nishida2024initialization} posit that loss spikes originate 
from non-uniformity in parameter updates, 
proposing the relative update ratio of parameters as a monitoring metric.

However, these scalar metrics are ambiguous and as early predictors of failure.
A rising value could signal healthy learning or impending divergence, 
often acting as a lagging indicator that confirms failure only after it becomes critical.
This underscores the need for more mechanistically grounded diagnostics.

\subsection{Spectral Properties for Model Training}

Training stability of deep networks is tightly connected to spectral properties of weight matrices and network Jacobians. For instance, \citet{pennington2017resurrecting} analyze dynamical isometry (keeping Jacobian singular values near~1) to explain and help avoid exponentially vanishing or exploding gradients. Several works use spectral constraints directly to improve stability. \citet{miyato2018spectral} introduce spectral normalization to constrain layer spectral norms and stabilize GAN training, a technique that also transfers to other architectures where weight conditioning matters. In recurrent models, orthogonal/unitary parametrizations (unitary RNNs) are proposed to preserve signal norm over many time steps and mitigate exploding / vanishing gradients~\citep{arjovsky2016unitary}. For modern deep Transformers, methods that change initialization / normalization to bound updates and smooth gradient norms have been proposed to enable very deep architectures (e.g., DeepNet and BranchNorm)\citep{wang2022deepnet,liu2023branchnorm}.

Spectral analysis has also been used to explain representation learning and feature emergence. \citet{imani2021representation} empirically show that learned representations' top singular vectors align with targets (representation alignment), and that alignment tends to increase with depth—connecting spectral structure of features to downstream transfer performance. More recently, \citet{yang2022tensor,yang2023spectral} formalize a spectral condition for feature learning, proving that correct spectral scaling of weight matrices (and their updates) is necessary for nontrivial feature learning at large widths and motivating parametrizations such as the Maximal Update Parametrization ($\mu$P).

While prior work uses spectral properties to enforce stability or explain learning, their potential as early-warning indicators for training instability remains underexplored. This work proposes a novel spectral indicator to fill this gap, enabling early detection and intervention.

\section{Spectral Alignment and Theoretical Justification}

This section introduces Spectral Alignment and its theoretical foundation as an early-warning predictor for loss explosion.
We ground our analysis in a general linear layer defined by a pre-activation $\mathbf{f}^{(l)} = \mathbf{h}^{(l-1)} \mathbf{W}_l \in \mathbb{R}^{1 \times n_l}$ and an element-wise activation function $\mathbf{h}^{(l)} = \phi(\mathbf{f}^{(l)})$. 
In \cref{sec:sai_definition}, we motivate and define the Spectral Alignment. Subsequently, in \cref{sec:theoretical_analysis}, we provide a rigorous theoretical justification for its effectiveness. In \cref{ssec:use_sai}, we discuss practical considerations for monitoring real-world training scenarios with Spectral Alignment.

\subsection{Definition of Spectral Alignment}\label{sec:sai_definition}

The foundation of SA rests on the spectral dynamics of feature learning. During training, the input representation to a layer, $\mathbf{h}^{(l-1)}$, tends to align with the principal left singular vector, $\mathbf{u}_1$, of its corresponding weight matrix, $\mathbf{W}_l$~\citep{imani2021representation,yang2023spectral} (see \cref{fig:sa_concept}). This alignment is the mechanism by which the layer's most significant learned linear transformation is applied to the input data.

While some degree of alignment is essential for feature learning, its distributional character across a batch of inputs is critical for stability. A healthy, discriminative layer should respond differently to varied inputs. Its principal singular vector, representing its dominant feature, should therefore align positively with some inputs and negatively with others (see \cref{fig:stable_sa}). A collapse in this sign diversity—where nearly all inputs are pushed in the same direction (i.e., their inner products with $\mathbf{u}_1$ share the same sign)—indicates the onset of representational collapse~\cref{fig:unstable_sa}. This loss of diversity indicates that the layer is no longer processing features discriminatively but has entered a dangerous amplification loop, which, as we prove in \cref{sec:theoretical_analysis}, leads to unchecked growth in the spectral norm and to training collapse.

To quantify this alignment, 
we propose the Spectral Alignment (SA) to measure the alignment between an input and the primary direction of transformation for that layer:

\begin{definition}[Spectral Alignment]
The Spectral Alignment (SA) for the $i$-th input sample to layer $l$ is the cosine similarity between the input vector $\mathbf{h}^{(l-1, i)}$ and the principal left singular vector $\mathbf{u}_1(\mathbf{W}_l)$ of the layer's weight matrix:
\begin{equation}
    \text{SA}_l^{(i)} = \frac{\langle \mathbf{h}^{(l-1, i)}, \mathbf{u}_1(\mathbf{W}_l) \rangle}{\|\mathbf{h}^{(l-1, i)}\|_2 \cdot \|\mathbf{u}_1(\mathbf{W}_l)\|_2}.
\end{equation}
\end{definition}

The key to the early warning lies not in any single SA value, but in the \emph{distribution} of these values across a data batch. A healthy layer should exhibit a distribution of $\{\text{SA}_l^{(i)}\}_{i \in S}$ with significant mass on both positive and negative sides of zero, reflecting its discriminative function. A collapse of this distribution, where most values assume the same sign, is a strong indicator of impending loss explosion.

\subsection{Theoretical Justification for Detection Ability of Spectral Alignment}
\label{sec:theoretical_analysis}

To understand why Spectral Alignment 
serves as a reliable early-warning indicator for training instability, 
we need to examine its theoretical foundation. 
This analysis clarifies the causal chain between representational 
collapse and loss explosion, 
establishing SA as a theoretically grounded predictor.

For simplicity and theoretical tractability, our analysis focuses on a standard Multi-Layer Perceptron (MLP). This model, composed of linear layers and non-linear activations, captures the essential dynamics of feature learning and weight updates that are fundamental to more complex architectures, including the Transformer models used in our experiments. By analyzing the MLP, we can isolate the core mechanism of spectral alignment without the confounding variables of attention or advanced normalization schemes, providing a clear and foundational understanding of the connections between spectral alignment and unstable training. The model is defined as follows: for $l=1\cdots L$, we have $\mathbf{h}^{(l)} = \mathrm{ReLU}(\mathbf{f}^{(l)})$ and $\mathbf{f}^{(l)} = \mathbf{h}^{(l-1)} \mathbf{W}_l$.

To quantify the degree of spectral alignment, we perform an orthogonal decomposition of the principal left singular vector $\mathbf{u}_1$ of the weight matrix $\mathbf{W}_l$, with respect to the input vector $\mathbf{h}^{(l-1)}$:
\begin{align}\label{eq:orthogonal_decomposition}
    \mathbf{u}_1^\top &= \alpha \mathbf{h}^{(l-1)} + \boldsymbol{\epsilon}.
\end{align}
Here, $\alpha$ is a positive projection coefficient because we can set $-\mathbf{u}_1$ and $+\mathbf{u}_1$ as the top principal left singular vector, and $\boldsymbol{\epsilon}$ is the residual term orthogonal to $\mathbf{h}^{(l-1)}$. 
Based on this orthogonal decomposition, 
we can formally state when spectral alignment is in a pathological state with the following assumption.
\begin{assumption}[Pathological Spectral Alignment]\label{ass:alignment}
If spectral alignment of a neural network layer is in a pathological state, the following inequality holds in \eqref{eq:orthogonal_decomposition}:
\begin{equation}\label{eq:pathological_alignment}
    \|\boldsymbol{\epsilon}\| \ll \alpha \|\mathbf{h}^{(l-1)}\|.
\end{equation}
\end{assumption}

Note that
\eqref{eq:pathological_alignment} means that the norm of the orthogonal term is much smaller than the norm of the projection term, thus the directions of $\mathbf{u}_1$ and $\mathbf{h}^{(l-1)}$ are nearly aligned (see \cref{fig:stable_sa,fig:unstable_sa}).

To demonstrate that the pathological alignment in \cref{ass:alignment} leads to training failure, we leverage the established link between loss explosions and the growth of weight spectral norms~\citep{MIT_DL_Blog_2023,rybakov2024methods}. We will show that the condition in \cref{ass:alignment} is a direct cause of this spectral norm increase, thereby establishing it as a precursor to training failure.

The main challenge is to create a tractable mathematical model of the complex, nonlinear training dynamics. 
We employ matrix perturbation theory to derive the change 
in spectral norm during a single gradient descent step, 
and under the pathological spectral alignment condition, 
we provide an approximate expression for this change. 
By analyzing the training dynamics, 
we determine the sign of the key term in the expression, 
ultimately proving that
the spectral norm of the weight matrix exhibits inevitable positive growth.

\begin{theorem}[Spectral Norm Growth During Training]\label{thm:spectral_growth}
    Under \cref{ass:alignment}, 
    the change $\Delta\|\mathbf{W}_l\|_2$ of the spectral norm for weight $\mathbf{W}_l$ in a single gradient descent step can be approximated as:
    $$
    \Delta\|\mathbf{W}_l\|_2 \approx -\eta\rho\alpha
    \frac{\langle \mathbf{v}_1^\top, \mathbf{f}^{(l)} \rangle}{\|\mathbf{f}^{(l)}\|_2}
    \|\mathbf{h}^{(l-1)}\|_2^2 (\mathbf{p}-\mathbf{t})(\mathbf{h}^{(L)})^\top
    $$
    where $\eta, \rho$ are positive scalars.
    During training,
    the spectral norm change $\Delta\|\mathbf{W}_l\|_2$ is positive, 
    indicating that the spectral norm of $\mathbf{W}_l$ will exhibit \textbf{positive growth}.
\end{theorem}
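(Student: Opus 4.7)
The plan is to treat the single gradient step $\Delta\mathbf{W}_l=-\eta\nabla_{\mathbf{W}_l}\mathcal{L}$ as a small perturbation of $\mathbf{W}_l$, invoke first-order singular-value perturbation theory, and then use the pathological alignment in \cref{ass:alignment} to collapse the resulting bilinear form onto the scalar expression in the statement. The mechanical calculation is short; the real difficulty lies in pinning down the sign.

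I would begin from the standard first-order perturbation identity $\Delta\sigma_1\approx\mathbf{u}_1^\top(\Delta\mathbf{W}_l)\mathbf{v}_1$. Using the MLP backpropagation decomposition $\nabla_{\mathbf{W}_l}\mathcal{L}=(\mathbf{h}^{(l-1)})^\top\mathbf{g}^{(l)}$ with $\mathbf{g}^{(l)}=\partial\mathcal{L}/\partial\mathbf{f}^{(l)}$, this yields
\begin{equation*}
\Delta\|\mathbf{W}_l\|_2\;\approx\;-\eta\bigl(\mathbf{u}_1^\top(\mathbf{h}^{(l-1)})^\top\bigr)\bigl(\mathbf{g}^{(l)}\mathbf{v}_1\bigr).
\end{equation*}
Substituting \eqref{eq:orthogonal_decomposition} and using $\boldsymbol{\epsilon}\perp\mathbf{h}^{(l-1)}$ turns the first factor exactly into $\alpha\|\mathbf{h}^{(l-1)}\|_2^2$, already accounting for two of the factors in the claimed approximation.

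Next I would reduce $\mathbf{g}^{(l)}\mathbf{v}_1$. Under \cref{ass:alignment}, $\mathbf{h}^{(l-1)}\approx\alpha^{-1}\mathbf{u}_1^\top$, and consequently
\begin{equation*}
\mathbf{f}^{(l)}\;=\;\mathbf{h}^{(l-1)}\mathbf{W}_l\;\approx\;\alpha^{-1}\sigma_1\mathbf{v}_1^\top,
\end{equation*}
so the pre-activation is nearly collinear with $\mathbf{v}_1^\top$. This lets me substitute $\mathbf{v}_1\approx\bigl(\langle\mathbf{v}_1^\top,\mathbf{f}^{(l)}\rangle/\|\mathbf{f}^{(l)}\|_2\bigr)(\mathbf{f}^{(l)})^\top/\|\mathbf{f}^{(l)}\|_2$, producing the cosine-like factor displayed in the theorem. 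The remaining dot product $\mathbf{g}^{(l)}(\mathbf{f}^{(l)})^\top$ is then expanded by recursively applying the chain rule through the ReLU MLP: for a quadratic or cross-entropy loss the output-layer signal is proportional to $(\mathbf{p}-\mathbf{t})$, and each intermediate layer contributes a ReLU mask on its active coordinates together with a downstream weight-matrix factor. I would collect all of these positive Jacobian and normalization contributions into the single scalar $\rho$, leaving the output-layer contraction $(\mathbf{p}-\mathbf{t})(\mathbf{h}^{(L)})^\top$. Assembling the three pieces reproduces the stated approximation.

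The main obstacle is the second half of the theorem, the sign. Since $\eta,\rho,\alpha>0$ and the cosine factor is positive by the near-parallelism argument above, positive growth requires $(\mathbf{p}-\mathbf{t})(\mathbf{h}^{(L)})^\top<0$. This is not a universal identity and must be derived from the self-consistent dynamics in the aligned regime. I would argue that alignment sets up a positive-feedback loop: the dominant contribution to the output error is itself channelled through the $\mathbf{u}_1\mathbf{v}_1^\top$ component of $\mathbf{W}_l$, so a positive excursion of $\mathbf{h}^{(l-1)}$ along $\mathbf{u}_1$ produces an output perturbation anti-aligned with the residual $\mathbf{p}-\mathbf{t}$, making their inner product negative. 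Equivalently, the gradient update, resolved on the rank-one component $\mathbf{u}_1\mathbf{v}_1^\top$, has a negative coefficient and therefore enlarges $\sigma_1$. Turning this intuition into a clean proof—without either restricting to a specific loss or invoking additional empirical regularity assumptions—is the delicate step, and I expect the final argument to rely on an auxiliary descent-lemma-style bound on $\langle\mathbf{p}-\mathbf{t},\mathbf{h}^{(L)}\rangle$ along the aligned trajectory.
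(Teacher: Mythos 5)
Your derivation of the approximation formula follows the same route as the paper: first-order singular-value perturbation ($\Delta\sigma_1\approx\mathbf{u}_1^\top(\Delta\mathbf{W}_l)\mathbf{v}_1$), the backpropagation factorization of the gradient, the orthogonal decomposition of $\mathbf{u}_1$ against $\mathbf{h}^{(l-1)}$ yielding $\alpha\|\mathbf{h}^{(l-1)}\|_2^2$, the near-collinearity of $\mathbf{f}^{(l)}$ with $\mathbf{v}_1^\top$ giving the cosine factor, and the mean-field collapse of the ReLU-mask/weight-product chain into the scalar $\rho$. That part is essentially the paper's proof, and your observation that the cosine factor is positive under the sign convention $\alpha>0$ is correct.

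The gap is the sign of $(\mathbf{p}-\mathbf{t})(\mathbf{h}^{(L)})^\top$, which you flag as the delicate step but do not actually close. Your proposed ``positive-feedback loop'' argument — that the output perturbation is anti-aligned with the residual $\mathbf{p}-\mathbf{t}$ because it is channelled through $\mathbf{u}_1\mathbf{v}_1^\top$ — is a heuristic about how the rank-one alignment feeds forward, and you yourself note you cannot turn it into a clean bound. The paper takes a different and more concrete route: it rewrites $\mathbf{h}^{(L)}=\mathbf{h}^{(L-1)}\mathbf{W}_L$, expands the contraction entry-wise, and identifies
\begin{equation*}
(\mathbf{p}-\mathbf{t})(\mathbf{h}^{(L)})^\top \;=\; \sum_i p_i z_i - z_k \;=\; E_{i\sim p(\mathbf{z})}[z_i]-z_k,
\end{equation*}
i.e.\ the negative margin between the target logit and the softmax-weighted average logit. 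The sign is then handled by a separate lemma that analyses the stationary points of cross-entropy in logit space: the unique zero-gradient point has $p_k=1$, any point with $z_k\le E_{i\sim p}[z_i]$ necessarily has $p_k<1$ and hence nonzero gradient, and the gradient field pushes $z_k$ up and $z_{i\neq k}$ down, so the descent trajectory must enter and remain in the region $z_k>E_{i\sim p}[z_i]$. This margin identification is the key missing insight in your plan; without it you are left with a qualitative loop argument that neither pins down the exact quantity to be signed nor supplies a mechanism for the sign. (Note that even the paper's argument here is a dynamics-level claim about where the trajectory ends up rather than a pointwise guarantee at the moment alignment becomes pathological — but it at least reduces the sign question to an explicit, checkable inequality, whereas your sketch does not.)
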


\cref{thm:spectral_growth} shows that pathological spectral alignment 
directly causes spectral norm growth. 
This growth then triggers the explosion of activations and gradients, 
which violates the principle of maintaining stable activations in deep learning,
ultimately leading to training failure. 
\begin{corollary}\label{cor:growth_amplification}
The spectral norm growth established in \cref{thm:spectral_growth} can trigger a growth in activations and gradients.
\end{corollary}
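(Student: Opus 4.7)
The plan is to establish Corollary~\ref{cor:growth_amplification} by separately treating the forward signal (activations) and the backward signal (gradients), showing in each case that the spectral-norm increase from Theorem~\ref{thm:spectral_growth} is converted into a multiplicative amplification of the relevant norm. Both legs of the argument will exploit Assumption~\ref{ass:alignment} to sharpen the generic operator-norm bound $\|\mathbf{x}\mathbf{W}_l\|_2 \le \|\mathbf{W}_l\|_2 \|\mathbf{x}\|_2$ into a near-equality along the direction that actually carries the signal.

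For the forward pass, I would decompose $\mathbf{W}_l$ via its SVD and reuse the orthogonal decomposition in \eqref{eq:orthogonal_decomposition}. Under Assumption~\ref{ass:alignment}, $\mathbf{h}^{(l-1)}$ is nearly collinear with $\mathbf{u}_1(\mathbf{W}_l)$, so $\mathbf{f}^{(l)} = \mathbf{h}^{(l-1)}\mathbf{W}_l \approx \|\mathbf{h}^{(l-1)}\|_2\,\sigma_1(\mathbf{W}_l)\,\mathbf{v}_1^\top$, which yields $\|\mathbf{f}^{(l)}\|_2 \approx \|\mathbf{W}_l\|_2\,\|\mathbf{h}^{(l-1)}\|_2$. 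I would handle the ReLU step by observing that the sign-diversity collapse discussed in Section~\ref{sec:sai_definition} makes the coordinates of $\mathbf{f}^{(l)}$ share a dominant sign pattern across the batch, so $\|\mathbf{h}^{(l)}\|_2 = \Theta(\|\mathbf{f}^{(l)}\|_2)$ rather than being annihilated. Composing this over subsequent layers shows that an increase in $\|\mathbf{W}_l\|_2$ feeds a multiplicative growth of all downstream activations, culminating in $\mathbf{h}^{(L)}$.

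For the backward pass, I would apply the chain rule to obtain $\partial \mathcal{L}/\partial \mathbf{h}^{(l-1)} = (\partial \mathcal{L}/\partial \mathbf{f}^{(l)})\,\mathbf{W}_l^\top$, whose norm is controlled by $\|\mathbf{W}_l\|_2\,\|\partial \mathcal{L}/\partial \mathbf{f}^{(l)}\|_2$; this bound becomes tight when the upstream gradient is itself concentrated along $\mathbf{v}_1(\mathbf{W}_l)$, which follows from the same alignment pattern propagated through the backward chain after inserting the diagonal ReLU masks. For the parameter gradient at an earlier layer $k$, the usual identity $\partial \mathcal{L}/\partial \mathbf{W}_k = (\mathbf{h}^{(k-1)})^\top\, \partial \mathcal{L}/\partial \mathbf{f}^{(k)}$ combines the inflated backward factor with the already-inflated activation $\mathbf{h}^{(k-1)}$ from the forward argument, so a single increase in $\|\mathbf{W}_l\|_2$ cascades to enlarge gradients throughout the network, completing the corollary.

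The main obstacle I anticipate is the ReLU step: the plain operator-norm inequality only yields upper bounds, so to convert ``growth of $\|\mathbf{W}_l\|_2$'' into ``growth of $\|\mathbf{h}^{(l)}\|_2$'' one must rule out cancellation by the nonlinearity. I would address this by tying the sign-diversity collapse directly to sign coherence of $\mathbf{f}^{(l)} \approx \|\mathbf{h}^{(l-1)}\|_2\,\sigma_1\,\mathbf{v}_1^\top$, which makes ReLU act essentially as the identity on a constant fraction of coordinates. A secondary issue is that Assumption~\ref{ass:alignment} is stated per layer, so the cascaded claim strictly requires either positing pathological alignment at each amplifying layer or invoking the amplification-loop picture from Section~\ref{sec:sai_definition} to argue that, once the instability is under way, downstream layers inherit the same alignment regime.
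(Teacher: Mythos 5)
Your forward-pass leg is essentially the paper's own argument: decompose $\mathbf{h}^{(l-1)}$ on $\mathbf{u}_1$, note $\mathbf{f}^{(l)}\approx \beta\sigma_1\mathbf{v}_1^\top$ with the residual $\boldsymbol{\delta}\mathbf{W}_l$ subdominant, hence $\|\mathbf{f}^{(l)}\|_2\approx\beta\|\mathbf{W}_l\|_2$ grows when $\|\mathbf{W}_l\|_2$ does. Where you actually improve on the paper is the ReLU step: the paper merely says that because ReLU is monotonic, growth of $\|\mathbf{f}^{(l)}\|_2$ implies growth of $\|\mathbf{h}^{(l)}\|_2$, which is not in general valid (monotonicity does not prevent the mask from zeroing most coordinates). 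Your observation that the sign-diversity collapse makes the entries of $\mathbf{f}^{(l)}\approx\beta\sigma_1\mathbf{v}_1^\top$ sign-coherent, so ReLU passes a fixed fraction of the signal and $\|\mathbf{h}^{(l)}\|_2=\Theta(\|\mathbf{f}^{(l)}\|_2)$, is the argument the paper would need but does not state.

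On the backward leg you take a genuinely different route. The paper reuses its mean-field gradient formula from Theorem~\ref{thm:gradient_expression} (with the scalar $\rho$), reads the product $\prod_{k=L}^{l+1}\mathbf{W}_k^\top$ as a "forward propagation" amplified along principal directions, and asserts $\|\partial\mathcal{L}/\partial\mathbf{W}_l\|_F\propto\|\mathbf{h}^{(l-1)}\|_2\prod_{k=L}^{l+1}\|\mathbf{W}_k\|_2$. You instead work layer by layer with the exact identity $\partial\mathcal{L}/\partial\mathbf{h}^{(l-1)}=(\partial\mathcal{L}/\partial\mathbf{f}^{(l)})\mathbf{W}_l^\top$, use the operator-norm bound, and argue tightness because the upstream gradient is concentrated along $\mathbf{v}_1(\mathbf{W}_l)$; then $\partial\mathcal{L}/\partial\mathbf{W}_k=(\mathbf{h}^{(k-1)})^\top\,\partial\mathcal{L}/\partial\mathbf{f}^{(k)}$ inherits inflation from both factors. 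This avoids the mean-field replacement of the ReLU masks by $\rho$ and is closer to a bona fide inequality, at the cost of needing the tightness claim at every layer. Finally, you are right to flag that Assumption~\ref{ass:alignment} is stated per layer: both your proof and the paper's silently promote it to all intermediate layers $k>l$ when claiming the cascade. The paper never addresses this; making the amplification-loop picture carry that weight, as you propose, is the honest reading of what is actually being argued.
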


This result provides the theoretical justification for using spectral 
alignment as an early-warning predictor for loss explosions, 
positioning it as a more fundamental indicator than conventional metrics.

\subsection{Implementation of Spectral Alignment to Monitor Training}\label{ssec:use_sai}
The practical application of Spectral Alignment as a monitoring tool is straightforward. At regular intervals during training, for a selected layer, we sample a batch of input activations and the corresponding weight matrix. We then compute the principal left singular vector of the weight matrix. While a full Singular Value Decomposition (SVD) is computationally expensive ($O(\mathrm{min}(n_{l-1}, n_l) n_{l-1} n_l)$), this vector can be efficiently approximated using a few steps of the Power Iteration method ($O(n_{l-1}n_l)$ per iteration), adding minimal overhead. For each input sample in the batch, we calculate its Spectral Alignment, yielding a distribution of values for the current training step.

The key is to analyze the shape of this distribution. A healthy layer, processing features discriminatively, will produce a distribution centered around zero with significant sign diversity. An impending failure is signaled by a collapse in this diversity, where the distribution shifts and concentrates on either the positive or negative side. This qualitative change serves as a clear and early warning, allowing for timely intervention. For an even more lightweight approach, the necessary tensors can be periodically saved for offline computation, eliminating any burden on the training loop. This makes spectral alignment a highly practical and scalable tool for monitoring model training.

\begin{figure*}[t]
    \centering
    \subfigure[Loss explosion caused by FA]{\includegraphics[width=0.4\textwidth]{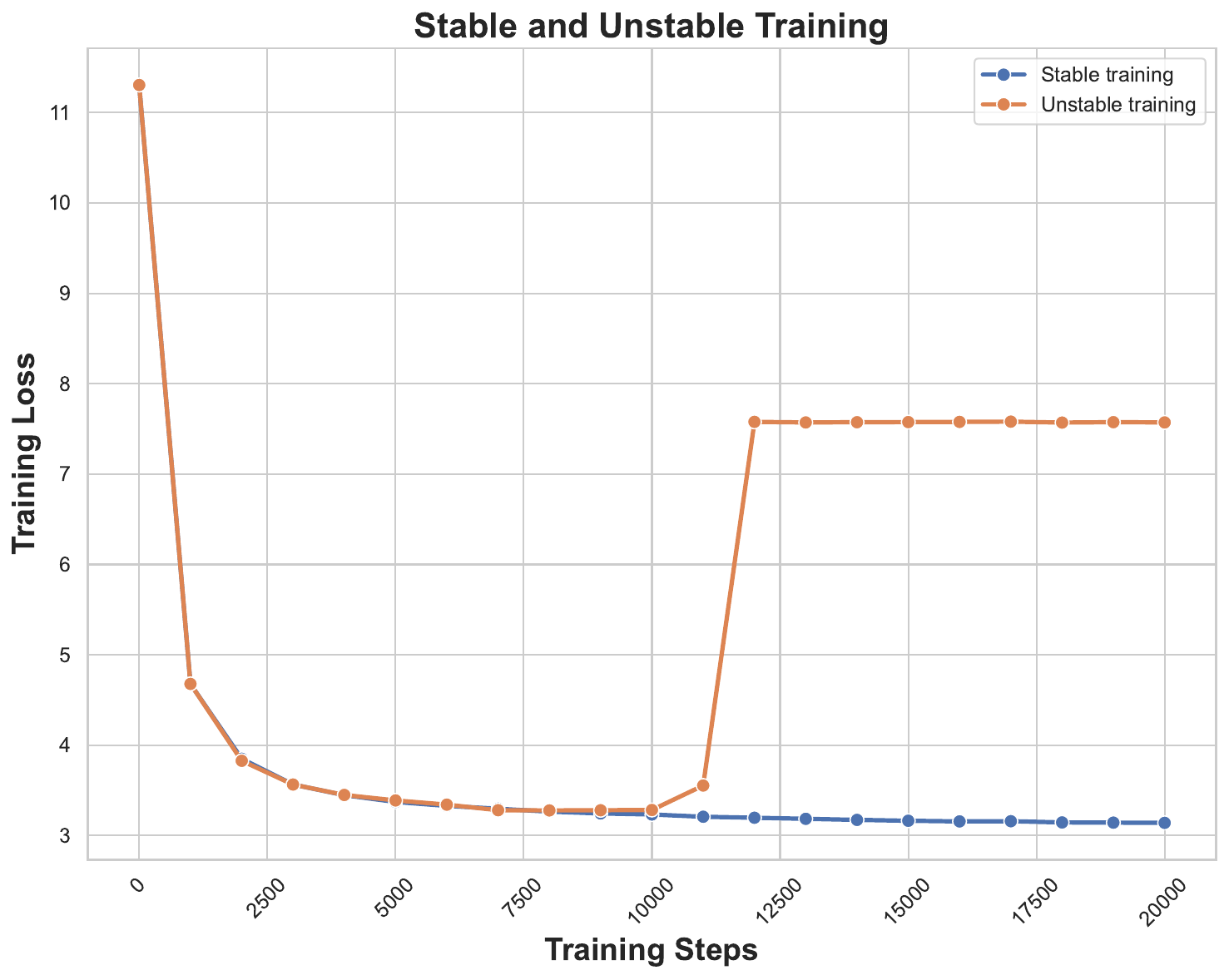}}\label{fig:fa_loss}
    \subfigure[Loss explosion caused by FFN]{\includegraphics[width=0.4\textwidth]{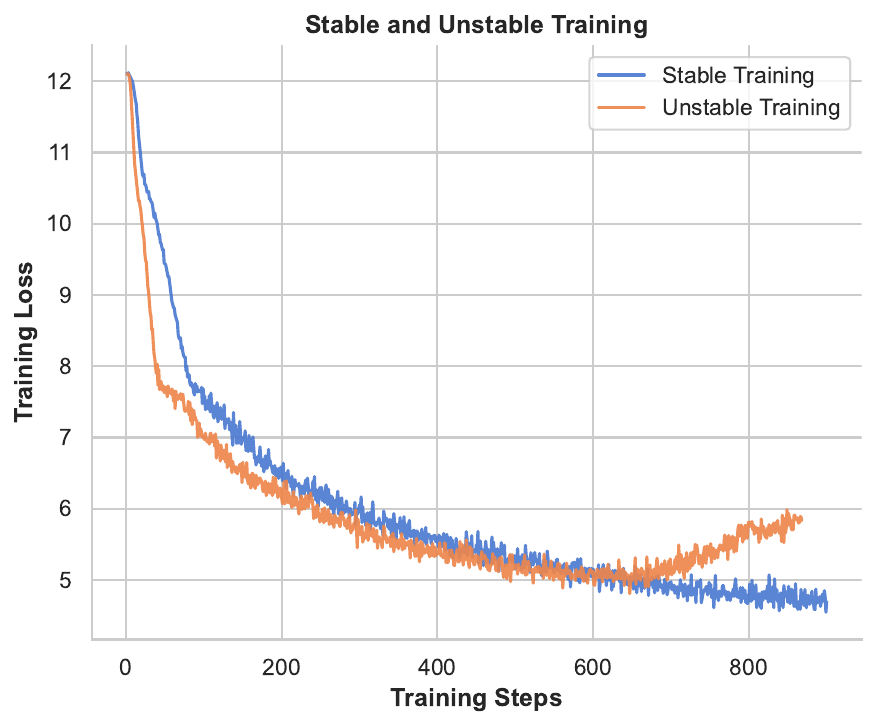}}\label{fig:ffn_loss}
    \caption{Comparison of stable and exploding training loss caused by FA and FFN.}\label{fig:fa_ffn_loss}
\end{figure*}

\paragraph{Comparison with Other Indicators}
Spectral Alignment serves as a more fundamental and direct indicator of instability. As our theoretical analysis in \cref{sec:theoretical_analysis} demonstrates, the growth in spectral norm is a direct \emph{consequence} of the collapse in alignment diversity. This norm growth then triggers a cascade of amplification in activations and gradients, ultimately causing loss explosions and training divergence~\citep{rybakov2024methods,fishman2024scaling,lee2024fp8}. While conventional scalar metrics provide ambiguous quantitative signals, the distribution of Spectral Alignment offers a clear, qualitative warning: a collapse from a diverse, balanced distribution to a one-sided one. This makes it a more reliable and consistent early-warning predictor.

\section{Experiments}

\subsection{Setup for Reproducing Loss Explosion}

\begin{figure*}[t]
    \centering
    \subfigure[SA distribution for stable training (FA)]{
        \includegraphics[width=0.4\textwidth]{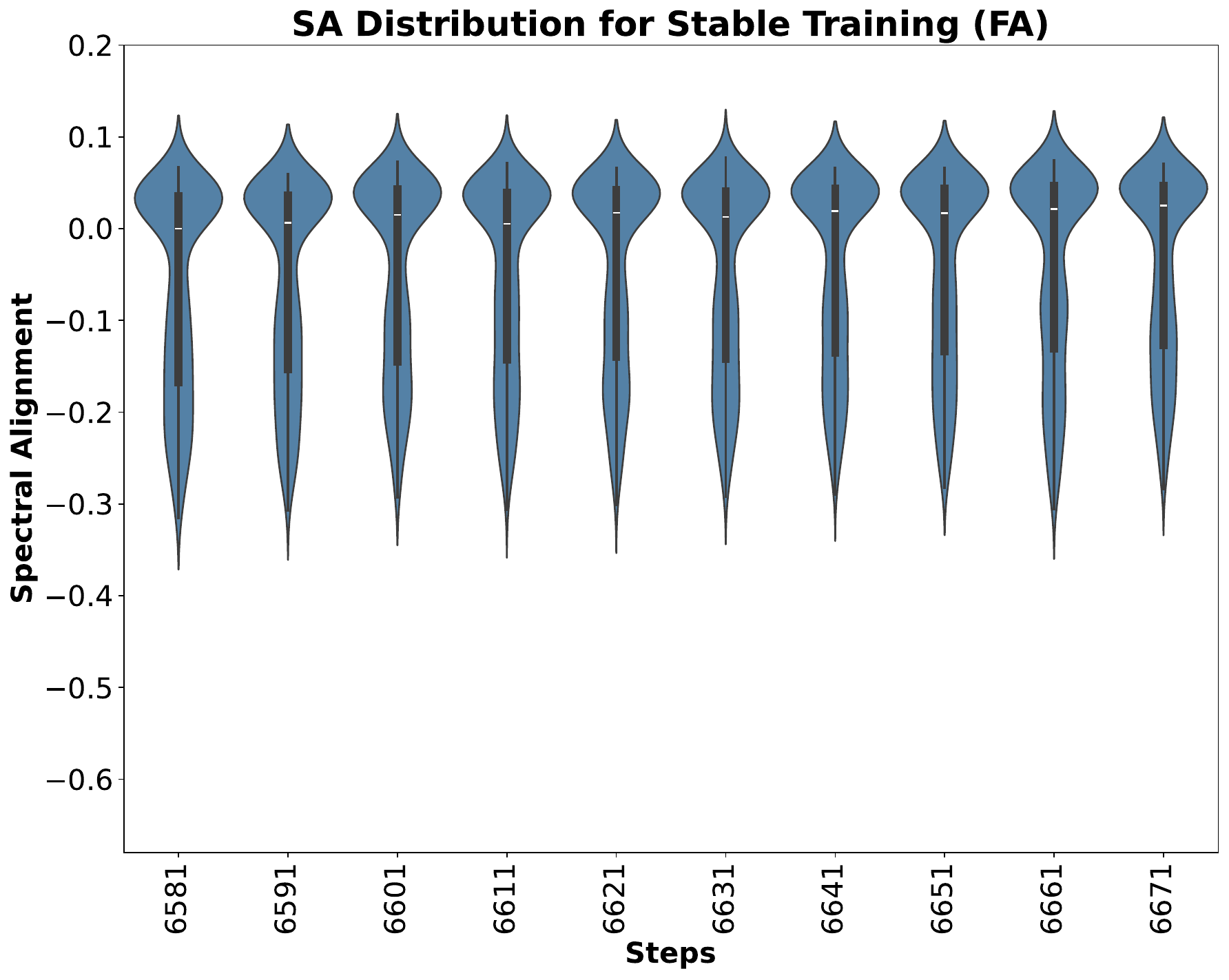}\label{fig:fa_sa_stable}
    }
    \subfigure[SA distribution for unstable training (FA)]{
        \includegraphics[width=0.4\textwidth]{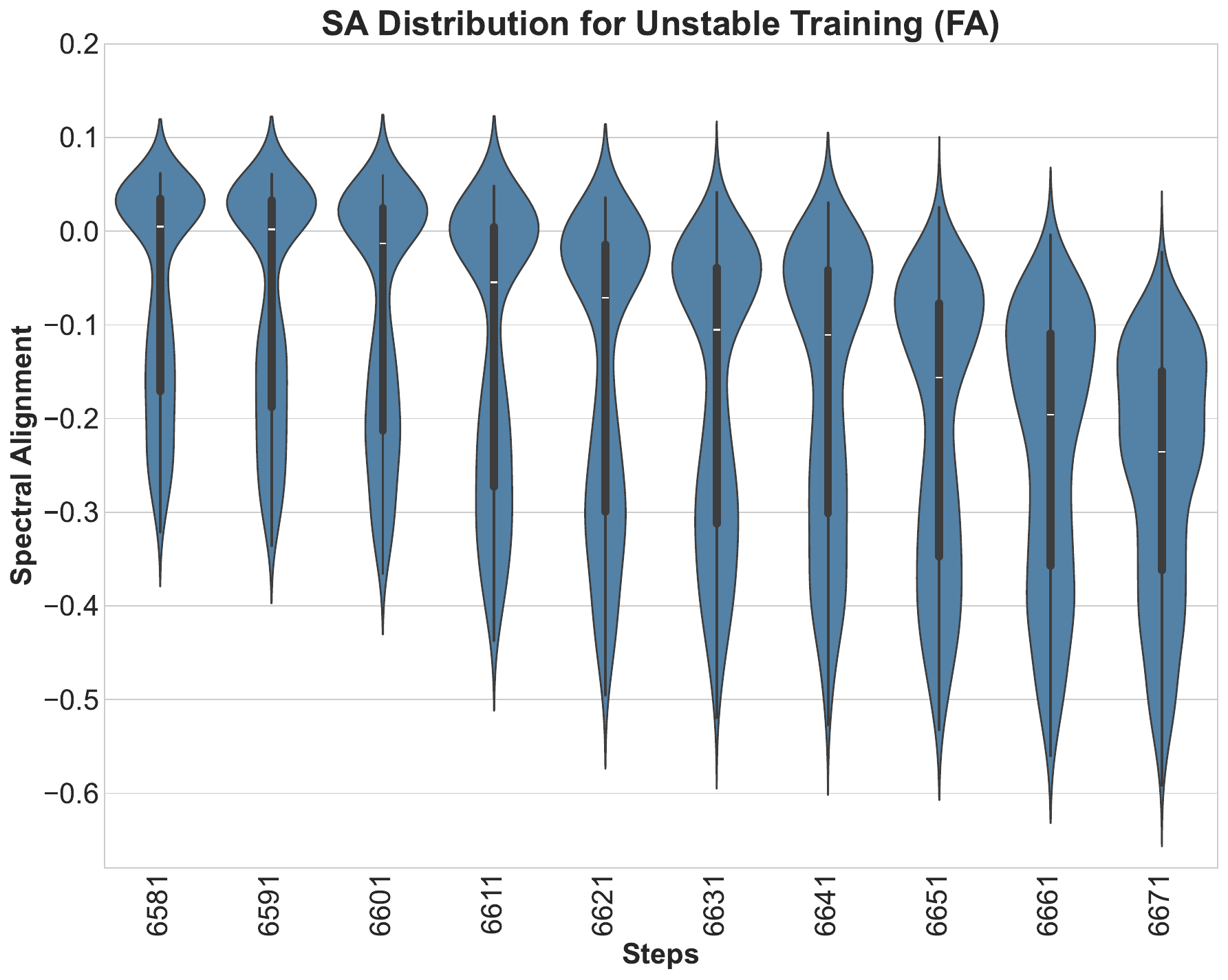}\label{fig:fa_sa_unstable}
    }
    \subfigure[SA distribution for stable training (FFN)]{
        \includegraphics[width=0.4\textwidth]{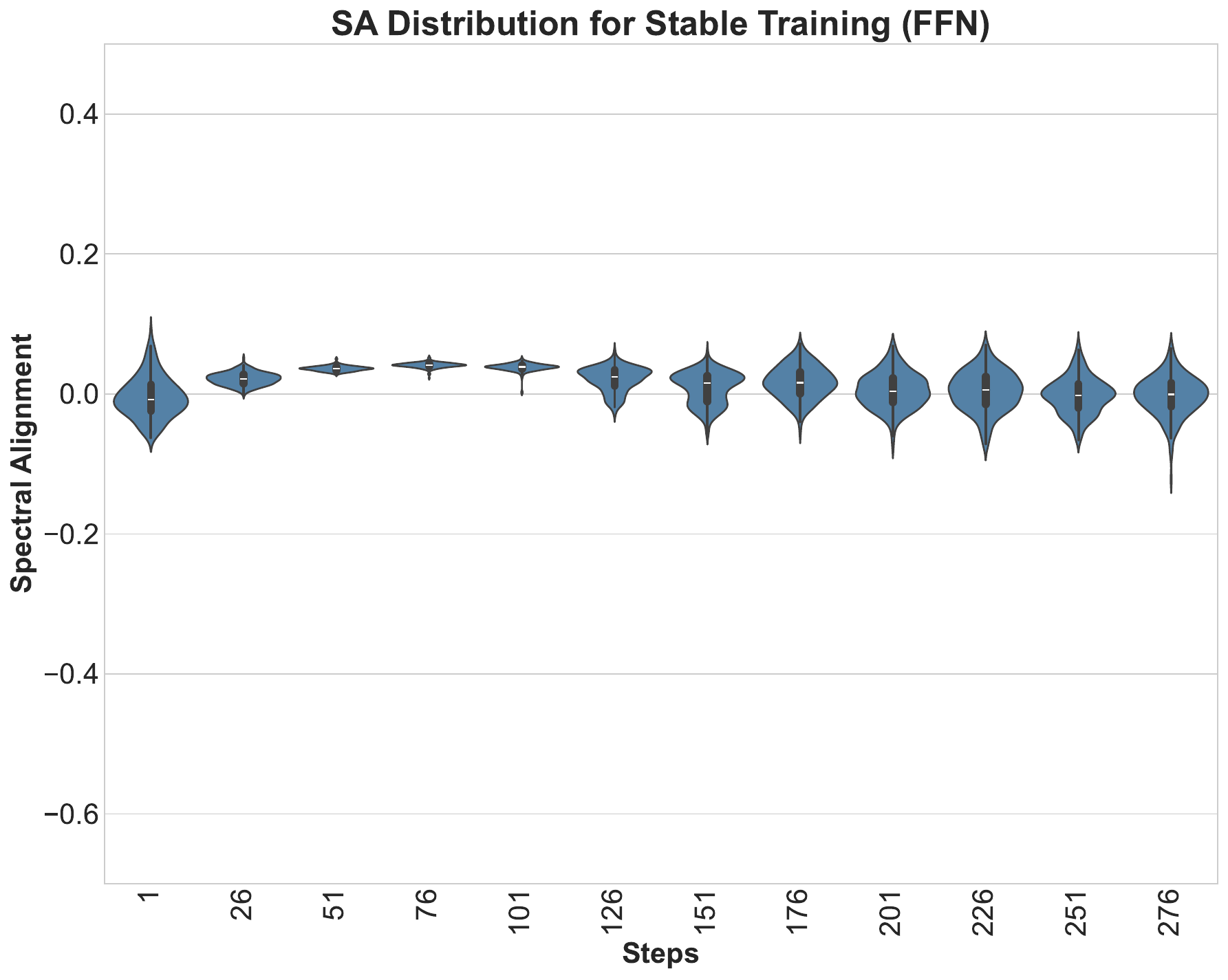}\label{fig:ffn_sa_stable}
    }
    \subfigure[SA distribution for unstable training (FFN)]{
        \includegraphics[width=0.4\textwidth]{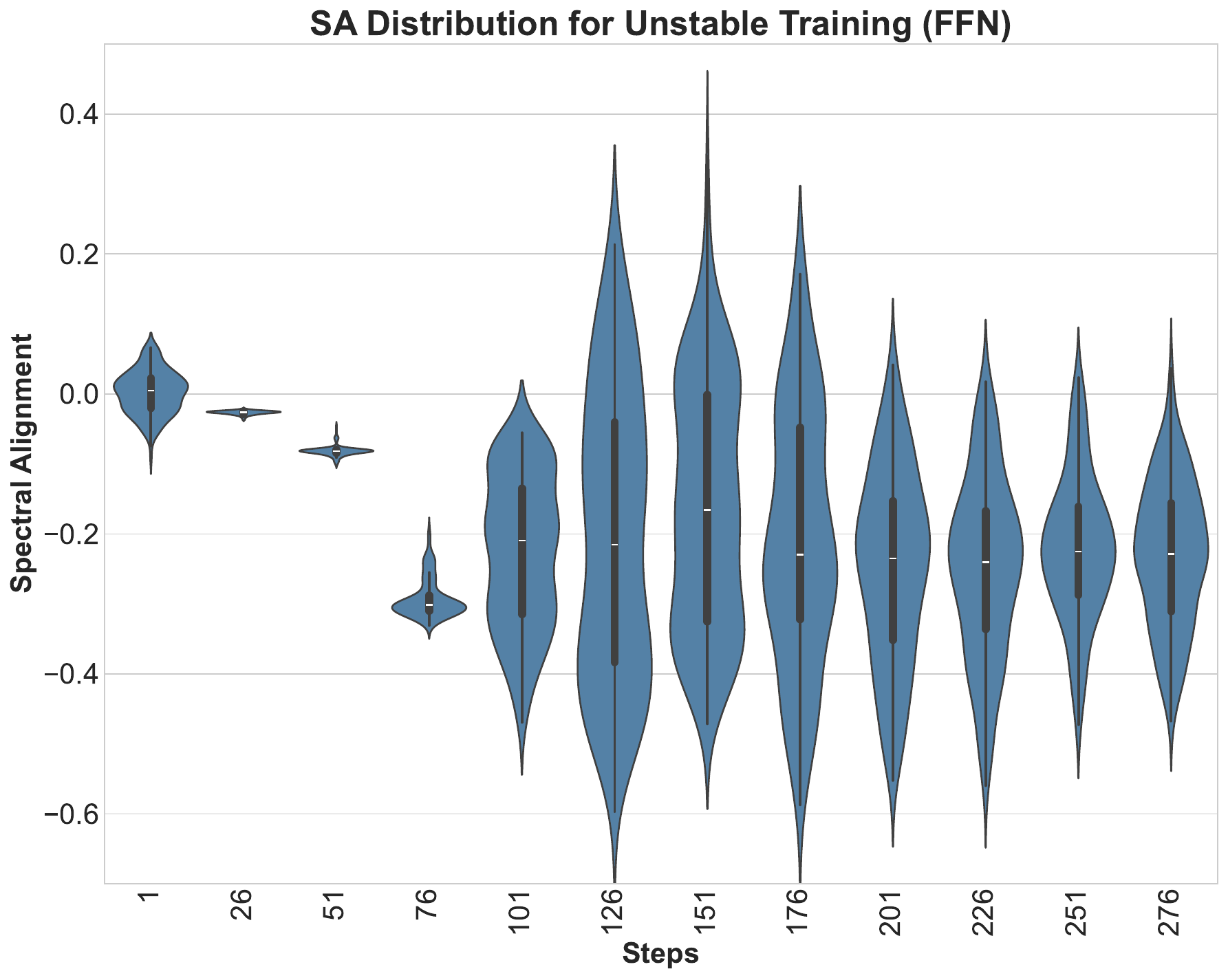}\label{fig:ffn_sa_unstable}
    }
    \caption{Spectral Alignment distributions for stable and unstable training.}\label{fig:sa_results}
\end{figure*}

\paragraph{Loss Explosion Caused by Flash Attention}
To investigate loss explosion, we reproduce a known instability issue that occurs when pre-training a GPT-2 model with Flash Attention in low precision~\citep{nanogpt_issue303,nanogpt_issue524}. Our experiments are implemented in PyTorch, utilizing the Distributed Data Parallel (DDP) framework for multi-GPU training.

We employ a 12-layer, 12-head decoder-only Transformer with a GPT-2 architecture and an embedding dimension of 768. The context length is set to 1024 tokens. To isolate the instability, dropout is disabled, and bias terms are removed from all LayerNorm and linear layers. Training is conducted using bfloat16 mixed precision.

The model is pre-trained on the OpenWebText dataset. We use the AdamW optimizer with a maximum learning rate of $1 \times 10^{-3}$, $\beta_1=0.9$, $\beta_2=0.95$, and no weight decay. The learning rate follows a cosine schedule with a 2000-step linear warmup, decaying to a minimum of $1 \times 10^{-5}$. Gradients are clipped at a maximum norm of 1.0.

The training is performed on 4 NVIDIA A100 GPUs (80GB). We use a per-GPU micro-batch size of 32 and 4 gradient accumulation steps, resulting in an effective batch size of 524,288 tokens. As shown in Figure \ref{fig:fa_loss}, this setup successfully reproduces the loss explosion, which occurs around step 11,000.

\paragraph{Loss Explosion Caused by FFN}
Following \citet{wortsman2023small}, who demonstrated that instabilities in large models can be replicated in smaller models by increasing the learning rate, we induce a loss explosion in a Qwen2.5-0.5B model. This allows us to study the failure dynamics in a controlled setting.

Our setup uses the Qwen2.5-0.5B model with a context length of 128 and BF16 mixed precision. The model is trained on the English portion of the Colossal Clean Crawled Corpus (C4), which is tokenized and divided into fixed-length chunks of 128 tokens. We use the AdamW optimizer with a weight decay of 0.1. To create stable and unstable runs, we set the learning rate to $1 \times 10^{-3}$ and $5 \times 10^{-3}$, respectively. The effective batch size is 128, achieved with a per-device batch size of 16 and 8 gradient accumulation steps. Training is managed using the Huggingface Accelerator framework.

As shown in \cref{fig:ffn_loss}, this configuration successfully reproduces the instability, with the high-learning-rate run experiencing a loss explosion around step 647. To analyze the failure, we use forward hooks to record the input activations and weights of the 10th layer's `gate\_proj' matrix within the SwiGLU FFN, which was identified as a source of instability via a preliminary spectral norm analysis.

\subsection{SA is an Unambiguous Predictor of Loss Explosion}

\paragraph{Compare SA in FA}
We first localize the source of instability by observing an abnormal increase in the spectral norm of the second-layer Flash Attention weights, which becomes apparent around step 8000. To test Spectral Alignment (SA) as an earlier predictor, we analyze its distribution for the key projection matrix in the same layer. \cref{fig:fa_sa_stable} and \cref{fig:fa_sa_unstable} visualize the SA distributions with violin plot from training steps 6581 to 6671, which is significantly earlier than the loss explosion around step 11,000. For the stable run (\cref{fig:fa_sa_stable}), the SA distribution remains centered, with a mean value consistently close to 0, indicating healthy sign diversity. In stark contrast, the unstable run (\cref{fig:fa_sa_unstable}) shows a dramatic collapse in sign diversity; the mean SA value rapidly becomes negative, dropping below -0.2 by step 6671. This provides a clear warning of impending failure thousands of steps before other indicators like spectral norm show significant anomalies.

\paragraph{Compare SA in FFN}
In the FFN instability experiment, we analyze the SA distribution for the 10th layer's gate\_proj matrix. \cref{fig:ffn_sa_stable,fig:ffn_sa_unstable} visualize the SA distributions with violin plot from training steps 1 to 276. For the stable run (\cref{fig:ffn_sa_stable}), the SA distribution remains centered, with a mean value consistently around 0, indicating healthy sign diversity. In stark contrast, the unstable run (\cref{fig:ffn_sa_unstable}) shows a dramatic collapse in sign diversity; the mean SA value rapidly becomes negative, dropping to around -0.2 by step 76, which is much earlier than the loss explosion. This provides a clear, early warning of impending failure.

\paragraph{Summary for Experiments of SA}
The key advantage of Spectral Alignment (SA) is its consistent and unambiguous signal across different failure setups, a stark contrast to the metrics discussed in \cref{ssec:other_indicators}. In both the Flash Attention and FFN instability experiments, SA provides an identical qualitative warning. Stable training runs consistently maintained an SA distribution centered around zero, reflecting healthy sign diversity. Conversely, unstable runs are preceded by a dramatic collapse of this distribution to one side, with its mean value rapidly shifting away from zero. This collapse serves as a powerful early warning, occurring thousands of steps before the loss explosion in the FA case and hundreds of steps earlier in the FFN case. This consistent, qualitative shift establishes SA as a reliable and superior monitoring tool.

\subsection{Other Indicators are Ambiguous for Detecting Loss Explosion}\label{ssec:other_indicators}

\paragraph{Spectral Norm of Weights}
As shown in \cref{fig:sn_results}, the spectral norm of weights is an ambiguous indicator. While the curves for stable and unstable training clearly diverge in retrospect, it is difficult to detect an anomaly from the unstable curve alone. The patterns of growth and the absolute values differ significantly across experimental setups. For instance, a spectral norm of 7.5 is stable in the FA experiment but precedes failure in the FFN experiment with a much lower value around 2.5. This lack of a consistent, universal signal means an operator cannot reliably identify an impending failure by observing a single training run without a stable baseline for comparison, making spectral norm an unreliable early-warning predictor.

\begin{figure}[h!]
    \centering
    \subfigure[Spectral norm of key weight (FA)]{
        \includegraphics[width=0.22\textwidth]{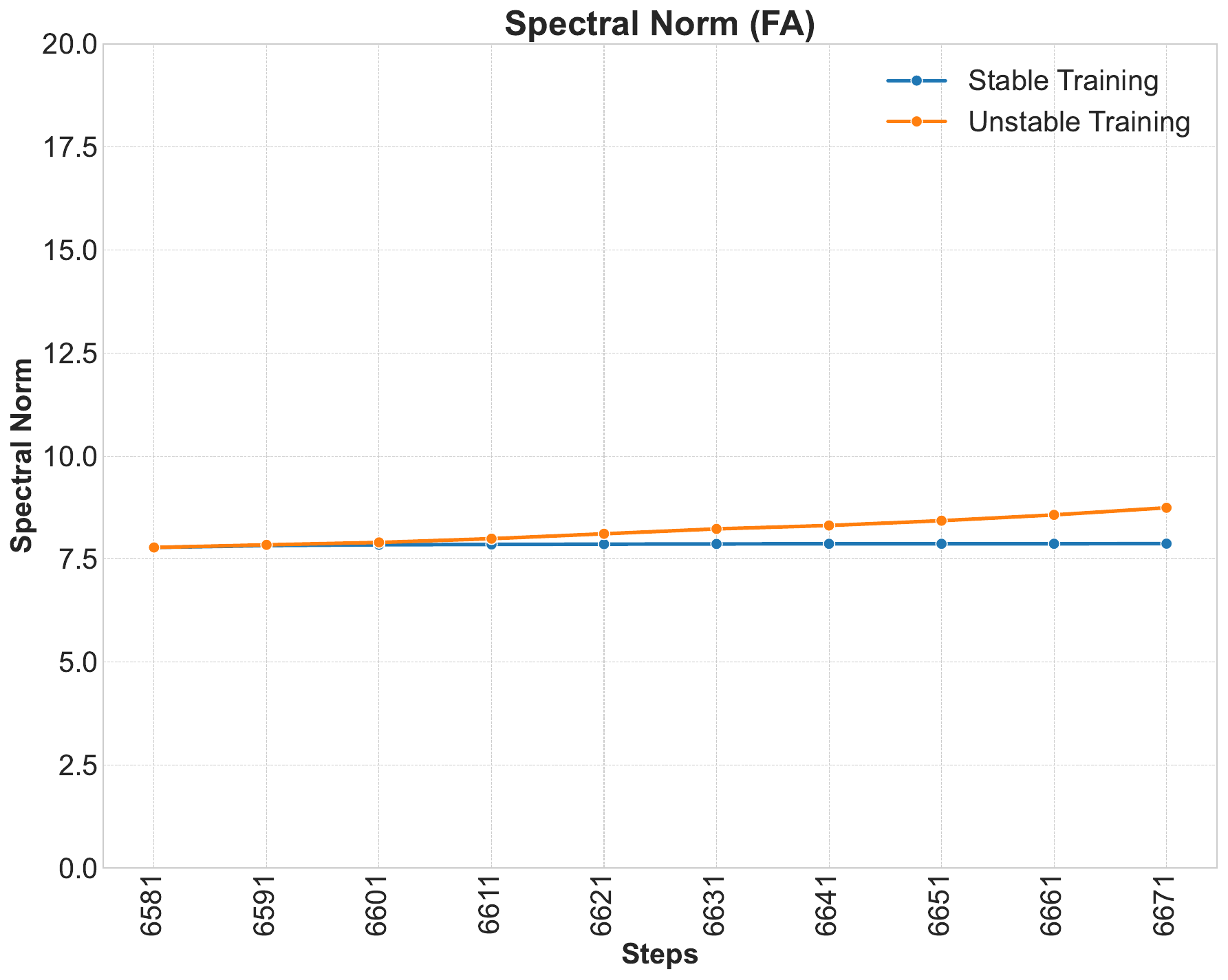}
    }
    \subfigure[Spectral norm of gate weight (FFN)]{
        \includegraphics[width=0.22\textwidth]{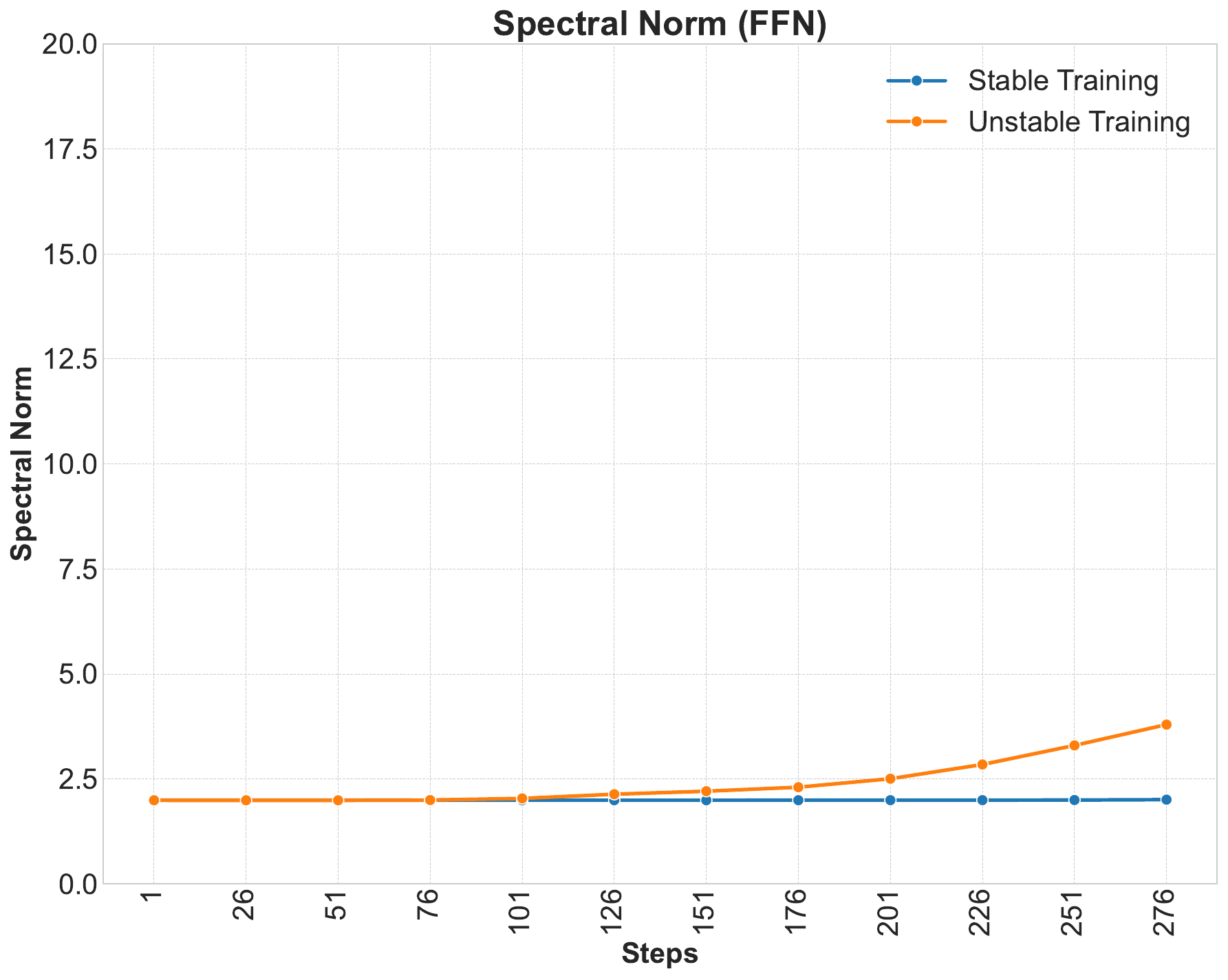}
    }
    \caption{Spectral norm of weights for stable and unstable training in two failed cases.}\label{fig:sn_results}
\end{figure}

\paragraph{Spectral Norm of Gradients}
The spectral norm of gradients is also an ambiguous and unreliable predictor, as shown in \cref{fig:sn_grad_results}. Its behavior is highly inconsistent across different failure modes. In the FA experiment, the gradient norm actually decreases from 0.08 to 0.01, masking the developing instability that SA detects. Conversely, in the FFN experiment, the gradient norm remains low (around 0.01) for both stable and unstable runs, offering no discernible signal of the impending failure. This lack of a consistent pattern in either absolute values or trends renders the gradient norm an unreliable tool for proactive monitoring.

\begin{figure}[h!]
    \centering
    \subfigure[Spectral norm of gradients for key weight (FA)]{
        \includegraphics[width=0.22\textwidth]{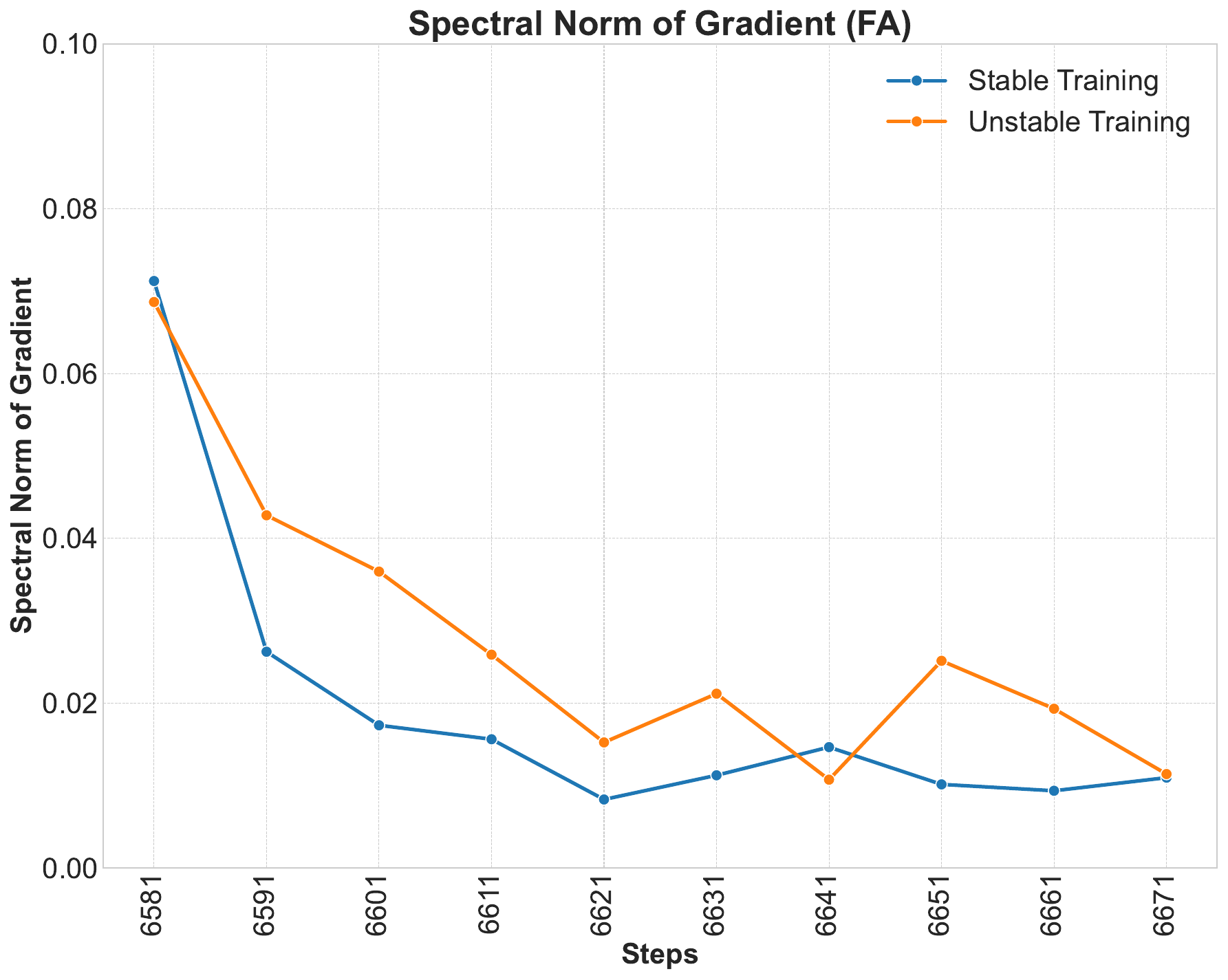}
    }
    \subfigure[Spectral norm of gradients for gate weight (FFN)]{
        \includegraphics[width=0.22\textwidth]{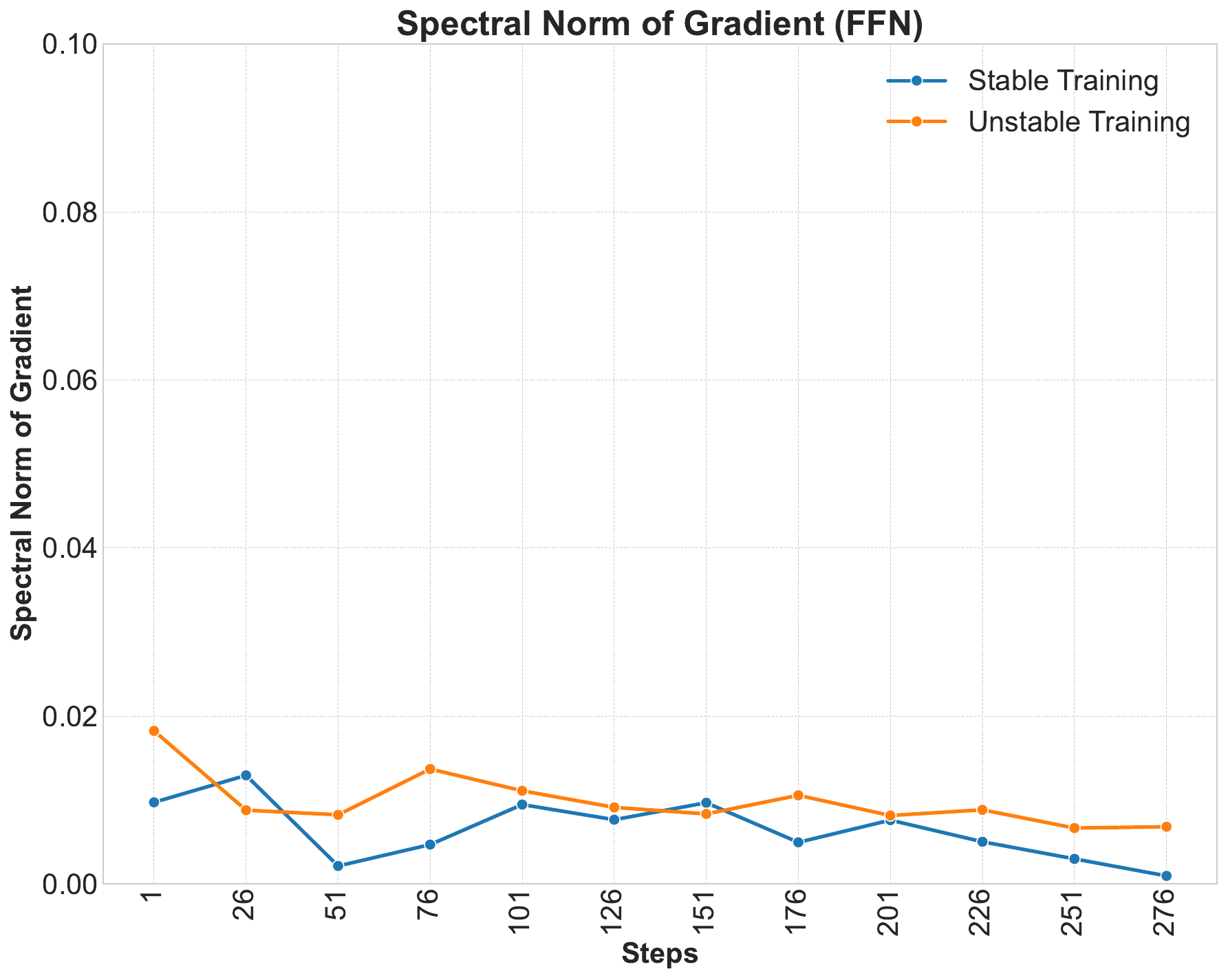}
    }
    \caption{Spectral norm of gradients for stable and unstable training in two failed cases.}\label{fig:sn_grad_results}
\end{figure}

\paragraph{Maximum Activation Value}
The maximum activation value is also an ambiguous and unreliable predictor, as shown in \cref{fig:max_activation_results}. Its behavior is highly inconsistent across different failure modes. In the FA experiment, the maximum activation for the unstable run diverges from the stable run early, but in the FFN experiment, the values for both runs remain low and close together until just before the loss explosion. This makes it impossible to set a universal threshold; an activation value of 20 is normal in the FA context but signals failure in the FFN case. Moreover, large activation values are not inherently problematic, as stable training runs can exhibit very high activations~\citep{sun2024massive}. This lack of a consistent scale or predictable pattern makes the maximum activation value a lagging and unreliable indicator of impending failure.

\begin{figure}[h!]
    \centering
    \subfigure[Maximum activation for key value (FA)]{
        \includegraphics[width=0.22\textwidth]{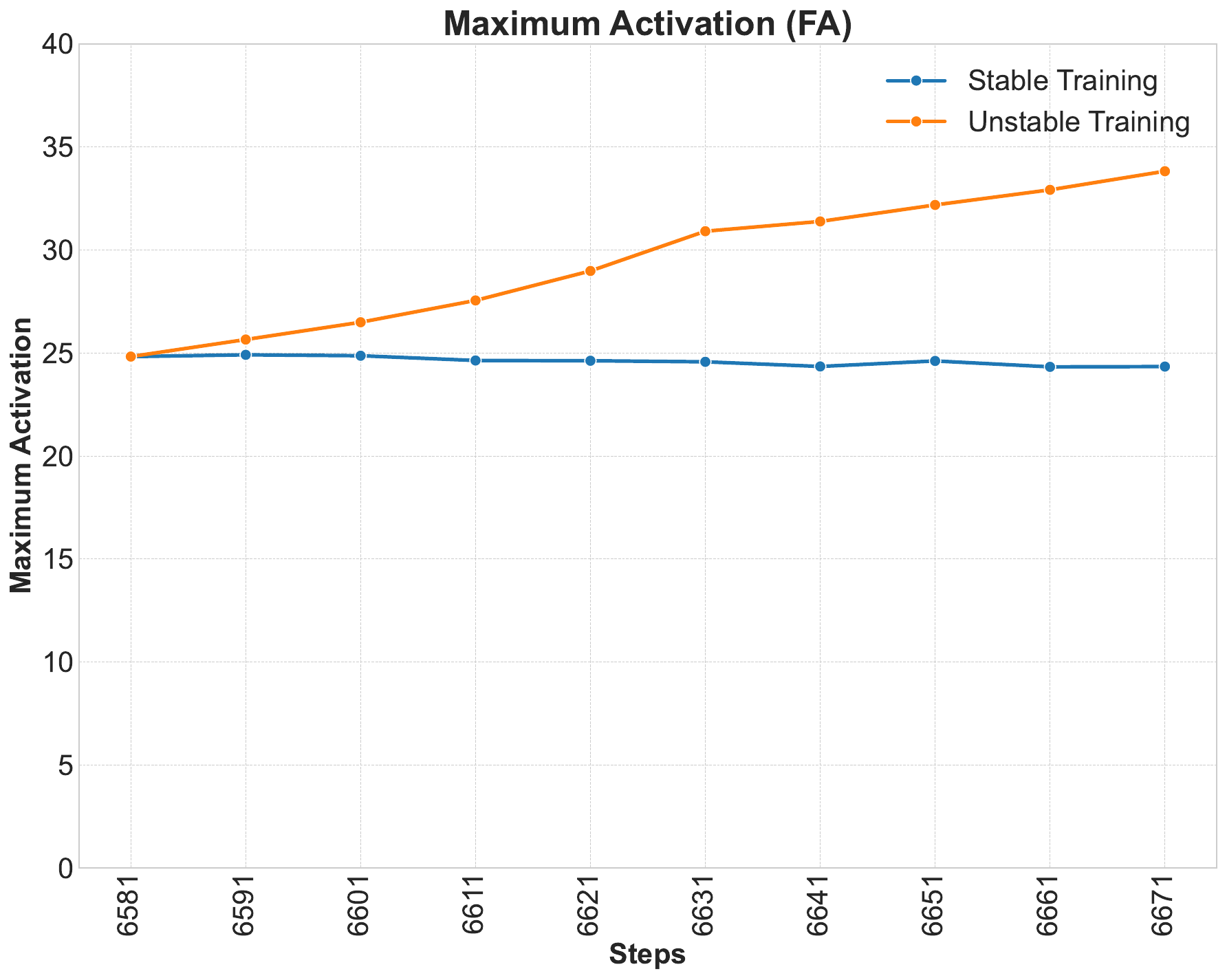}
    }
    \subfigure[Maximum activation for gate output (FFN)]{
        \includegraphics[width=0.22\textwidth]{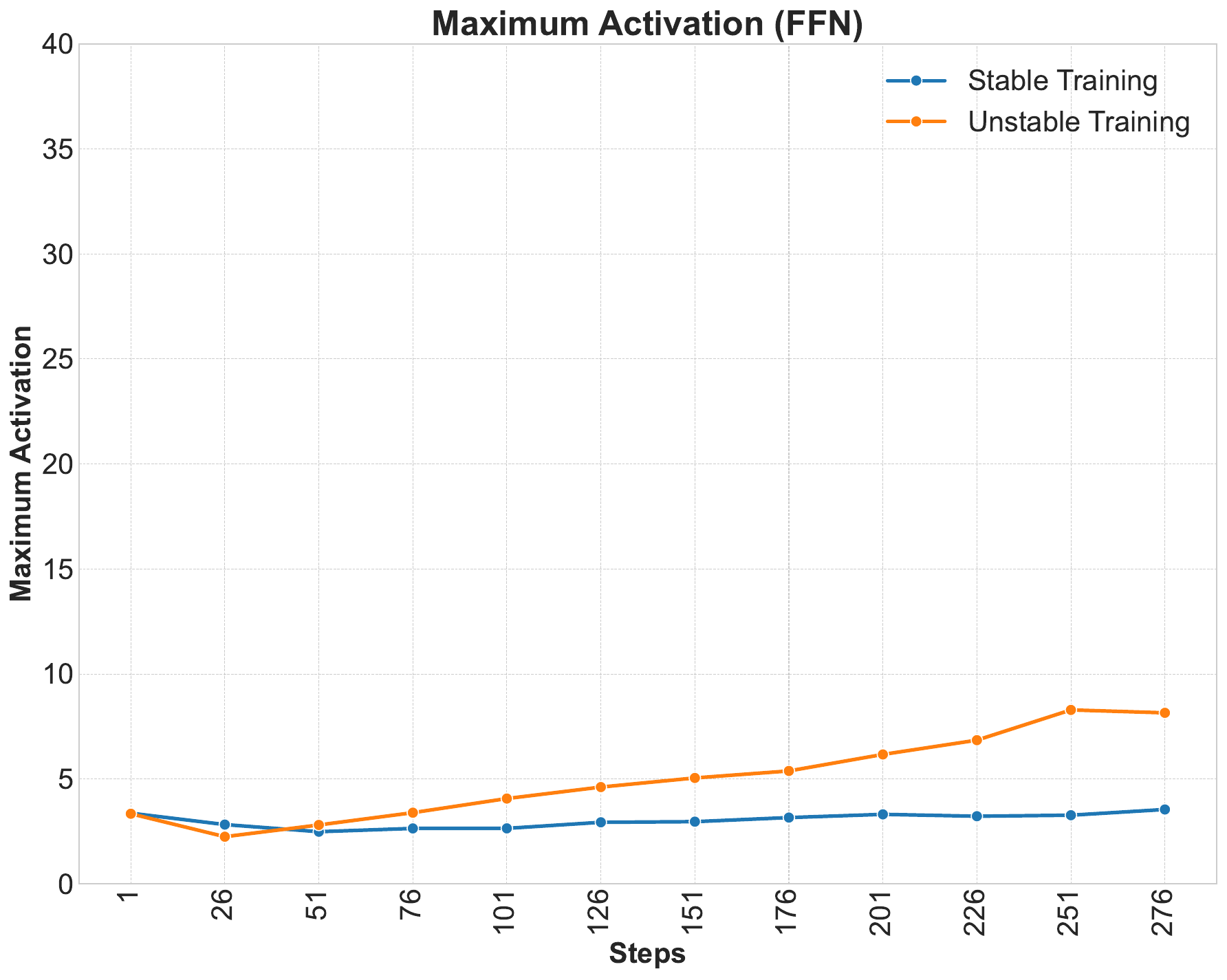}
    }
    \caption{Maximum activation values for stable and unstable training in two failed cases.}\label{fig:max_activation_results}
\end{figure}

\paragraph{Stable Rank of Weights}
The stable rank of weights is another unreliable predictor, as shown in \cref{fig:stable_rank_results}. Its behavior is highly inconsistent across different failure modes, making it impossible to establish a universal warning signal. In the FA experiment, the stable rank is low (around 9) for both stable and unstable runs, offering little distinction. In the FFN experiment, a notable contrast in stable rank behavior emerges between stable and unstable training. During unstable training, the stable rank maintains a high value of approximately 400 before dropping sharply just prior to model failure. Conversely, the stable FFN training session shows a gradual decrease in the stable rank, from around 437 to 433. This downward trend is similar to that observed in the unstable run of the FA experiment. This dramatic difference in both absolute scale and trend makes it impossible to define a consistent threshold or pattern for early detection, rendering stable rank an unreliable predictor for proactive monitoring.

\begin{figure}[h!]
    \centering
    \subfigure[Stable rank of key weight (FA)]{
        \includegraphics[width=0.22\textwidth]{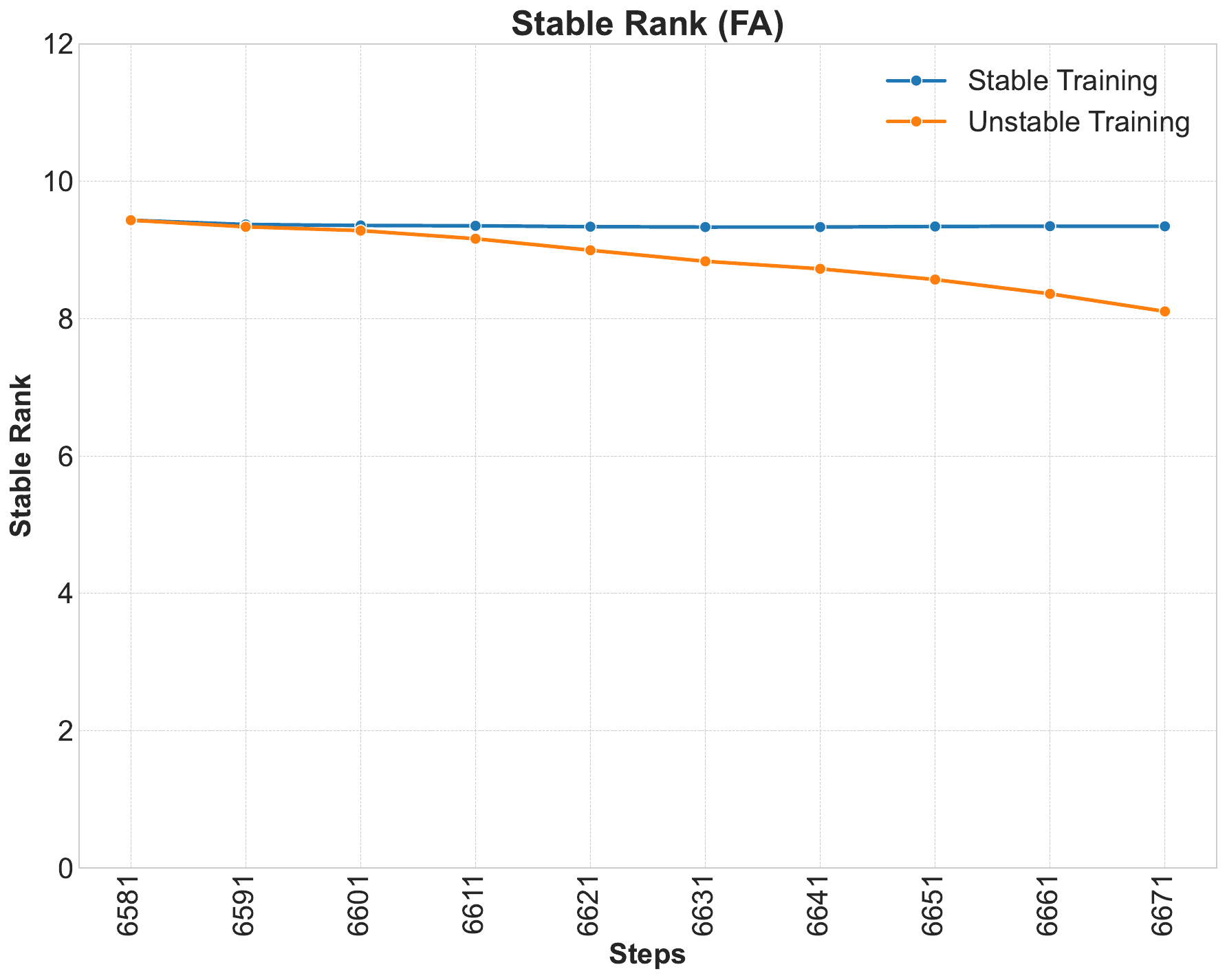}
    }
    \subfigure[Stable rank of gate weight (FFN)]{
        \includegraphics[width=0.22\textwidth]{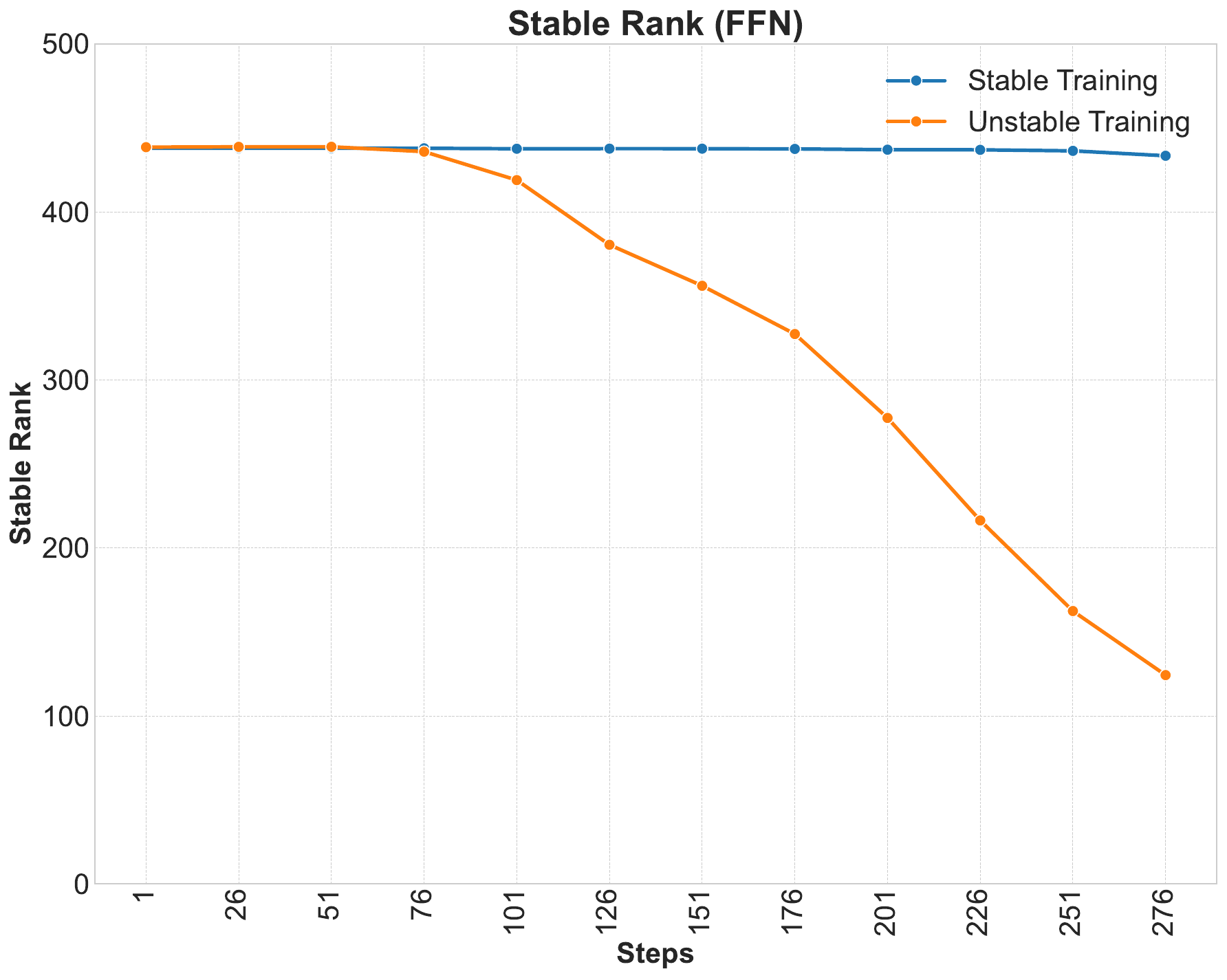}
    }
    \caption{Stable rank values for stable and unstable training in two failed cases.}\label{fig:stable_rank_results}
\end{figure}

\paragraph{Summary for Experiments of Other Indicators}
In contrast to the clear, qualitative signal from Spectral Alignment, conventional scalar metrics prove to be ambiguous and unreliable early-warning predictors. As shown in our experiments, metrics such as the spectral norm of weights (\cref{fig:sn_results}), spectral norm of gradients (\cref{fig:sn_grad_results}), maximum activation value (\cref{fig:max_activation_results}), and stable rank (\cref{fig:stable_rank_results}) all suffer from the same fundamental flaw: their absolute values and trends are highly dependent on the specific model architecture and failure mode. A value that signals impending failure in one context may be perfectly normal in another. This lack of a consistent, universal signal makes it practically impossible to set a reliable threshold for proactive monitoring without a stable baseline for comparison, often rendering them lagging indicators that only confirm a failure after it has become critical.

\section{Conclusion}
We introduced Spectral Alignment (SA), a novel, theoretically-grounded metric for the early detection of loss explosion. We demonstrate that a collapse in the sign diversity of the SA distribution is a fundamental precursor to training divergence, offering a significantly earlier and clearer warning than conventional metrics like weight or gradient norms. Its low computational overhead makes SA a practical tool for safeguarding resource-intensive model training.

\paragraph{Limitations}
Our work has several limitations. First, our theory is grounded in MLPs with ReLU activations, and while the empirical results on Transformers are strong, a formal theoretical extension to more complex architectures with mechanisms like attention and LayerNorm remains an area for future work. Second, while we demonstrated SA's effectiveness in two distinct and common failure scenarios, its effectiveness across all possible failure modes requires further investigation.


\clearpage

\nocite{*} 
\bibliography{ref}

\clearpage

\clearpage
\appendix
\thispagestyle{empty}

\onecolumn
\aistatstitle{Supplementary Materials}

\section{Proofs of Theoretical Results}

\begin{definition}
Consider an $L$-layer MLP with ReLU activation functions:
\begin{itemize}
    \item Input: $\mathbf{h}^{(0)} = \mathbf{x} \in \mathbb{R}^{1 \times n_0}$
    \item Hidden Layers: $\mathbf{h}^{(l)} = \mathsf{ReLU}(\mathbf{f}^{(l)})$, where $\mathbf{f}^{(l)} = \mathbf{h}^{(l-1)} \mathbf{W}_l$ for $l = 1, \dots, L$.
    \item $\mathbf{W}_l \in \mathbb{R}^{n_{l-1} \times n_l}$ is the weight matrix for layer $l$.
    \item $\mathbf{h}^{(l)} \in \mathbb{R}^{1 \times n_l}$ is the activation vector of layer $l$. (We use row vectors for activations).
    \item $\mathbf{f}^{(l)} \in \mathbb{R}^{1 \times n_l}$ is the pre-activation vector of layer $l$.
    \item Output: $\mathbf{h}^{(L)} \in \mathbb{R}^{1 \times n_L}$.
\end{itemize}

\end{definition}

\subsection{Proof with respect to parameters}

First we derive the gradient expression of the weight matrix $\mathbf{W}_l$ 
under the cross-entropy loss function, 
which is given by the following theorem:

\begin{theorem}[Gradient Expression for Linear Layers]\label{thm:gradient_expression}
Given an $L$-layer Multilayer Perceptron (MLP) with the ReLU activation function, i.e., $\mathbf{h}^{(l)} = \text{ReLU}(\mathbf{h}^{(l-1)} \mathbf{W}_l)$,
where $\mathbf{h}^{(0)} = \mathbf{x}, l=1 \dots L$. Let $\mathcal{L}$ be a cross-entropy loss function that depends on the final output $\mathbf{h}^{(L)}$.
Then, the gradient of the loss function $\mathcal{L}$ with respect to the weight matrix $\mathbf{W}_l$ can be approximated as:
$$
\frac{\partial \mathcal{L}}{\partial \mathbf{W}_l} \approx (\mathbf{h}^{(l-1)})^\top (\mathbf{p}-\mathbf{t})\rho \left( \prod_{k=L}^{l+1} \mathbf{W}_k^\top \right) \cdot \text{Diag}(\mathds{1}(\mathbf{f}^{(l)} > 0))
$$
where $\mathbf{p}-\mathbf{t}$ is a term related to the gradient of the loss function at the output layer,
and $\rho$ is a scalar approximation factor.
\end{theorem}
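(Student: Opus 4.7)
The plan is to derive the gradient by backward chain rule through the MLP, use the standard softmax--cross-entropy identity to seed it, and then approximate the intermediate ReLU Jacobians (diagonal indicator matrices) by a single scalar $\rho$ so that they commute past the weight matrices. First I would invoke the softmax composed with cross-entropy identity to obtain $\partial \mathcal{L}/\partial \mathbf{h}^{(L)} = \mathbf{p} - \mathbf{t}$, where $\mathbf{p}$ is the predicted probability row vector and $\mathbf{t}$ the one-hot target. This supplies the $(\mathbf{p}-\mathbf{t})$ factor that appears on the right-hand side of the claim.

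Next I would telescope the chain rule through layers $L, L-1, \dots, l$. Each layer $k$ contributes two Jacobians, $\partial \mathbf{f}^{(k)}/\partial \mathbf{h}^{(k-1)} = \mathbf{W}_k^\top$ and $\partial \mathbf{h}^{(k)}/\partial \mathbf{f}^{(k)} = \text{Diag}(\mathds{1}(\mathbf{f}^{(k)} > 0))$; composing them yields the exact pre-activation gradient
\begin{equation*}
\frac{\partial \mathcal{L}}{\partial \mathbf{f}^{(l)}} = (\mathbf{p}-\mathbf{t}) \Big( \prod_{k=L}^{l+1} \text{Diag}(\mathds{1}(\mathbf{f}^{(k)} > 0)) \mathbf{W}_k^\top \Big) \text{Diag}(\mathds{1}(\mathbf{f}^{(l)} > 0)),
\end{equation*}
and multiplying on the left by $(\mathbf{h}^{(l-1)})^\top = \partial \mathbf{f}^{(l)}/\partial \mathbf{W}_l$ gives the weight gradient exactly.

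The remaining, and hardest, step is to replace each intermediate diagonal $\text{Diag}(\mathds{1}(\mathbf{f}^{(k)} > 0))$ for $l < k \le L$ by a scalar multiple of the identity, $\rho_k \mathbf{I}$, where $\rho_k$ is the average activation fraction at layer $k$. Once this substitution is made the $\rho_k$ commute past every $\mathbf{W}_k^\top$ factor and collapse into a single positive constant $\rho = \prod_{k=l+1}^{L} \rho_k$, while the layer-$l$ mask is kept exactly because it interacts with the specific input $\mathbf{h}^{(l-1)}$ through which we differentiate. This rearrangement produces the stated formula. The main obstacle is justifying the scalar collapse rigorously: an indicator diagonal is not generally a multiple of the identity, and pulling it past $\mathbf{W}_k^\top$ incurs error that depends on how the active-neuron pattern correlates with the column subspace of the subsequent weight product. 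A clean treatment would require a mean-field style assumption --- for instance that the ReLU masks are approximately coordinate-uniform or approximately uncorrelated with the downstream weight directions --- so the final statement is best read as a heuristic expansion in which $\rho$ is defined a posteriori as the scalar making the equality hold in expectation over the relevant input distribution, with the tightness controlled by the variance of the activation indicators.
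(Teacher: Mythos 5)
Your proposal takes essentially the same route as the paper: chain rule to factor out $(\mathbf{h}^{(l-1)})^\top$, the softmax--cross-entropy identity to seed the $(\mathbf{p}-\mathbf{t})$ factor, unrolling the backward Jacobian as an alternating product of $\mathbf{W}_k^\top$ and ReLU indicator diagonals, and a mean-field scalar replacement of the intermediate diagonals by a single $\rho$. Your exact intermediate expression with the ordering $\text{Diag}(\mathds{1}(\mathbf{f}^{(k)}>0))\,\mathbf{W}_k^\top$ is in fact the shape-consistent form, whereas the paper's exact gradient equation writes these factors in the opposite order; after the $\rho$-collapse both arrive at the same stated approximation, and your explicit caveat that the scalar collapse is a heuristic mean-field step rather than a controlled bound matches the spirit of the paper's own justification.
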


\begin{proof}
    By the chain rule of differentiation and matrix differentiation rules, the gradient of the weight matrix is given by:
    $$
    \frac{\partial \mathcal{L}}{\partial \mathbf{W}_l} = \left(\frac{\partial \mathbf{f}^{(l)}}{\partial \mathbf{W}_l}\right)^\top \frac{\partial \mathcal{L}}{\partial \mathbf{f}^{(l)}} = (\mathbf{h}^{(l-1)})^\top \frac{\partial \mathcal{L}}{\partial \mathbf{f}^{(l)}}.
    $$
    where the second equality is obtained by differentiating the linear layer's forward propagation expression $\mathbf{f}^{(l)} = \mathbf{h}^{(l-1)} \mathbf{W}_l$ with respect to $\mathbf{W}_l$.

    To find $\frac{\partial \mathcal{L}}{\partial \mathbf{f}^{(l)}}$,
    the gradient must be backpropagated from the network output $h^{(L)}$ to the $l$-th layer along the computation graph. This process also follows the chain rule:
    $$
    \frac{\partial \mathcal{L}}{\partial \mathbf{f}^{(l)}} = \frac{\partial \mathcal{L}}{\partial \mathbf{h}^{(L)}} \frac{\partial \mathbf{h}^{(L)}}{\partial \mathbf{f}^{(l)}} = (\mathbf{p} - \mathbf{t}) \frac{\partial \mathbf{h}^{(L)}}{\partial \mathbf{f}^{(l)}}.
    $$
    where the second equality holds because $\mathcal{L}$ is a cross-entropy loss function.
    Therefore, its gradient with respect to the final network output $\mathbf{h}^{(L)}$ (i.e., the logits) is $ \frac{\partial \mathcal{L}}{\partial \mathbf{h}^{(L)}} = \mathbf{p} - \mathbf{t} $, where $\mathbf{p}$ is the probability distribution row vector output by the network via the Softmax function, and $\mathbf{t}$ is the one-hot encoded row vector corresponding to the true label.

    The second term in the expression for $\frac{\partial \mathcal{L}}{\partial \mathbf{f}^{(l)}}$,
    $\frac{\partial h^{(L)}}{\partial f^{(l)}} \in \mathbb{R}^{n_L \times n_l}$, is a Jacobian matrix that describes the local linear behavior of the nonlinear mapping from the pre-activation of the $l$-th layer to the final network output. It can be found by the chain rule for composite functions, expressed as a product of the Jacobian matrices of each layer from layer $l+1$ to layer $L$:
    $$
    \frac{\partial \mathbf{h}^{(L)}}{\partial \mathbf{f}^{(l)}} = \left( \frac{\partial \mathbf{h}^{(L)}}{\partial \mathbf{h}^{(L-1)}} \right) \cdots \left( \frac{\partial \mathbf{h}^{(l+1)}}{\partial \mathbf{h}^{(l)}} \right) \left( \frac{\partial \mathbf{h}^{(l)}}{\partial \mathbf{f}^{(l)}} \right) = \left( \prod_{k=l+1}^{L} \frac{\partial \mathbf{h}^{(k)}}{\partial \mathbf{h}^{(k-1)}} \right) \frac{\partial \mathbf{h}^{(l)}}{\partial \mathbf{f}^{(l)}}
    $$

    Since the activation function is ReLU, $\mathbf{h}^{(l)} = \text{ReLU}(\mathbf{f}^{(l)})$, its corresponding Jacobian matrix is a diagonal matrix:
    $$
    \frac{\partial \mathbf{h}^{(l)}}{\partial \mathbf{f}^{(l)}} = \text{Diag}(\mathds{1}(\mathbf{f}^{(l)} > 0))
    $$
    Its diagonal elements are determined by the sign of the pre-activation values of that layer, being either 1 or 0. Similarly, by the chain rule,
    the intra-layer Jacobian of network layer $k$, $\frac{\partial \mathbf{h}^{(k)}}{\partial \mathbf{h}^{(k-1)}}$, is:
    $$
    \frac{\partial \mathbf{h}^{(k)}}{\partial \mathbf{h}^{(k-1)}} = \frac{\partial \mathbf{f}^{(k)}}{\partial \mathbf{h}^{(k-1)}} \frac{\partial \mathbf{h}^{(k)}}{\partial \mathbf{f}^{(k)}} = \mathbf{W}_k \text{Diag}(\mathds{1}(\mathbf{f}^{(k)} > 0))
    $$

    Thus, we have obtained the exact expression for the gradient:
    \begin{equation}\label{eq:gradient_exact}
    \frac{\partial \mathcal{L}}{\partial \mathbf{W}_l} = (\mathbf{h}^{(l-1)})^\top (\mathbf{p} - \mathbf{t}) \left( \prod_{k=L}^{l+1} \mathbf{W}_k^\top \text{Diag}(\mathds{1}(\mathbf{f}^{(k)} > 0)) \right) \text{Diag}(\mathds{1}(\mathbf{f}^{(l)} > 0))
    \end{equation}

    The above equation contains complex Diag terms that depend on the instantaneous activation state,
    making the Jacobian matrix a product of a series of random matrices, which is difficult to analyze directly.
    To obtain an expression that is more tractable for theoretical analysis, we adopt a mean-field approximation method, which is common in deep learning theory.

    Consider any scalar path from a neuron in the $l$-th layer to a neuron in the $L$-th layer in the computation graph. The validity of this path depends on whether all ReLU units along the way are activated.
    Assuming the network is sufficiently deep and the parameters are randomly initialized,
    it can be argued that this series of random, data-dependent multiplicative gating effects
    $\prod_{k=L}^{l+1} \text{Diag}(\mathds{1}(\mathbf{f}^{(k)} > 0))$
    can be replaced by its mean effect.

    We consider the dynamics of the network as follows: during training, for any such path, the probability of it being activated is the same, denoted by a scalar constant $\rho \in [0,1]$.
    This consideration reflects the mean-field approximation viewpoint, i.e., that the validity of the path for gradient signal propagation is uniform throughout the network.

    Based on this, we approximate the complex, instantaneous activation-dependent Diag matrices in the exact expression (\ref{eq:gradient_exact}),
    modeling their overall effect as a scalar factor $\rho$ times the product of the transposed weights:
    \begin{equation}\label{eq:gradient_approx}
    \left( \prod_{k=L}^{l+1} \mathbf{W}_k^\top \text{Diag}(\mathds{1}(\mathbf{f}^{(k)} > 0)) \right) \text{Diag}(\mathds{1}(\mathbf{f}^{(l)} > 0)) \approx \rho \left( \prod_{k=L}^{l+1} \mathbf{W}_k^\top \right)
    \end{equation}
    Here, $\rho$ is defined as the expected probability of any scalar path from layer $l$ to layer $L$ being activated, and the $\prod$ symbol denotes the continued product of matrices. The above approximation captures the average strength of gradient propagation while ignoring sample-specific activation fluctuations.

    Substituting the approximation (\ref{eq:gradient_approx}) into the expression (\ref{eq:gradient_exact}),
    and retaining the activation term of the current layer $\text{Diag}(\mathbf{1}(\mathbf{f}^{(l)} > 0))$, we obtain the final approximate gradient expression:
    $$
    \frac{\partial \mathcal{L}}{\partial \mathbf{W}_l} \approx (\mathbf{h}^{(l-1)})^\top (\mathbf{p}-\mathbf{t})\rho \left( \prod_{k=L}^{l+1} \mathbf{W}_k^\top \right) \cdot \text{Diag}(\mathds{1}(\mathbf{f}^{(l)} > 0))
    $$
\end{proof}

For a linear layer $\mathbf{f}^{(l)} = \mathbf{h}^{(l-1)} \mathbf{W}_l$
in a deep neural network, 
we now derive the condition under which the spectral norm $\|\mathbf{W}_l\|_2$ increases after a parameter update.

Under gradient descent, the update rule for the weight matrix $\mathbf{W}_l$ is:
$$
\mathbf{W}_{l, \text{new}} = \mathbf{W}_l - \eta \frac{\partial \mathcal{L}}{\partial \mathbf{W}_l}
$$
To analyze the change in the spectral norm of $\mathbf{W}_l$ after this update, 
we rely on matrix perturbation theory. 
The updated matrix can be seen as the original matrix $\mathbf{W}_l$ 
plus a small perturbation $-\eta \frac{\partial \mathcal{L}}{\partial \mathbf{W}_l}$. 
The following theorem describes the spectral norm increasing under such a perturbation.

\begin{theorem}[Change in Spectral Norm under Perturbation]\label{thm:spectral_norm_change}
    Let $A \in \mathbb{R}^{m \times n}$ be a matrix,
    whose singular value decomposition is $\mathbf{A} = \mathbf{U} \bm{\Sigma} \mathbf{V}^\top$, where $\sigma_1 = \|\mathbf{A}\|_2$ is the principal singular value.
    If a small perturbation $\eta \mathbf{B}$ is applied to it,
    then the change in its spectral norm is approximately equal to the projection of the perturbation $\eta \mathbf{B}$ onto the principal singular directions of $\mathbf{A}$, i.e.:
    \begin{equation}\label{eq:perturbation}
        \| \mathbf{A} + \eta \mathbf{B}\|_2 = \sigma_1 + \eta \cdot \langle \mathbf{u}_1^\top \mathbf{B}, \mathbf{v}_1^\top \rangle + o(\eta \|\mathbf{B}\|_2)
    \end{equation}
    where $\mathbf{u}_1, \mathbf{v}_1$ are the principal left and right singular vectors of matrix $\mathbf{A}$, respectively.
\end{theorem}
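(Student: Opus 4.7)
The plan is to prove this as a standard first-order perturbation expansion of the largest singular value, using the variational characterization of the spectral norm, $\|\mathbf{M}\|_2 = \max_{\|x\|=\|y\|=1} y^\top \mathbf{M} x$, which is attained at the leading singular pair $(\mathbf{u}_1,\mathbf{v}_1)$. Note that since $\langle \mathbf{u}_1^\top \mathbf{B}, \mathbf{v}_1^\top \rangle = \mathbf{u}_1^\top \mathbf{B}\,\mathbf{v}_1$, the target claim is the first-order Taylor expansion of the map $\eta \mapsto \|\mathbf{A}+\eta\mathbf{B}\|_2$ at $\eta=0$ with derivative $\mathbf{u}_1^\top \mathbf{B}\,\mathbf{v}_1$.

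First I would dispatch the lower bound for free: the pair $(\mathbf{u}_1,\mathbf{v}_1)$ is admissible for the variational problem defining $\|\mathbf{A}+\eta\mathbf{B}\|_2$, so
$$\|\mathbf{A}+\eta\mathbf{B}\|_2 \;\ge\; \mathbf{u}_1^\top(\mathbf{A}+\eta\mathbf{B})\mathbf{v}_1 \;=\; \sigma_1 + \eta\,\mathbf{u}_1^\top \mathbf{B}\,\mathbf{v}_1,$$
which already carries the linear coefficient with no error term. The work is therefore entirely on the matching upper bound.

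For the upper bound I would pass to the symmetric matrix $\mathbf{M}(\eta) := (\mathbf{A}+\eta\mathbf{B})^\top(\mathbf{A}+\eta\mathbf{B}) = \mathbf{A}^\top\mathbf{A} + \eta(\mathbf{A}^\top\mathbf{B}+\mathbf{B}^\top\mathbf{A}) + \eta^2 \mathbf{B}^\top\mathbf{B}$, whose largest eigenvalue is $\|\mathbf{A}+\eta\mathbf{B}\|_2^2$. Assuming (as is implicit in the statement, since otherwise $\mathbf{u}_1,\mathbf{v}_1$ are not uniquely defined) that $\sigma_1$ is a simple singular value of $\mathbf{A}$, classical Rayleigh--Schr\"odinger first-order perturbation theory for symmetric matrices gives
$$\lambda_{\max}\bigl(\mathbf{M}(\eta)\bigr) = \sigma_1^2 + \eta\,\mathbf{v}_1^\top\bigl(\mathbf{A}^\top\mathbf{B}+\mathbf{B}^\top\mathbf{A}\bigr)\mathbf{v}_1 + o(\eta\|\mathbf{B}\|_2).$$
Using the SVD identity $\mathbf{A}\mathbf{v}_1 = \sigma_1 \mathbf{u}_1$, the first-order coefficient collapses to $2\sigma_1\,\mathbf{u}_1^\top\mathbf{B}\,\mathbf{v}_1$. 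Taking square roots and expanding $\sqrt{\sigma_1^2 + \eta c + o(\eta)} = \sigma_1 + \eta c/(2\sigma_1) + o(\eta)$ with $c = 2\sigma_1\,\mathbf{u}_1^\top\mathbf{B}\,\mathbf{v}_1$ recovers exactly the claimed expansion.

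The main obstacle is justifying the $o(\eta\|\mathbf{B}\|_2)$ remainder rigorously. The cleanest route is to invoke analytic perturbation theory for symmetric operators (Kato, or Stewart--Sun for singular values directly), which requires a nonzero spectral gap $\sigma_1 > \sigma_2$ so that the top eigenspace of $\mathbf{M}(\eta)$ depends analytically on $\eta$ in a neighborhood of $0$. If one prefers to avoid analyticity, an elementary alternative is to combine the free lower bound above with an upper bound obtained by writing the perturbed top singular pair as $\mathbf{u}_1' = \mathbf{u}_1 + \mathbf{\delta}_u$, $\mathbf{v}_1' = \mathbf{v}_1 + \mathbf{\delta}_v$ with $\|\mathbf{\delta}_u\|,\|\mathbf{\delta}_v\| = O(\eta\|\mathbf{B}\|_2/\mathrm{gap})$ (Wedin's $\sin\Theta$ theorem), and checking that the cross terms involving $\mathbf{\delta}_u,\mathbf{\delta}_v$ vanish at first order because of the optimality conditions $\mathbf{A}\mathbf{v}_1 = \sigma_1\mathbf{u}_1$, $\mathbf{A}^\top\mathbf{u}_1 = \sigma_1\mathbf{v}_1$ together with $\mathbf{\delta}_u \perp \mathbf{u}_1$, $\mathbf{\delta}_v \perp \mathbf{v}_1$. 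Either route yields the desired matching upper bound and completes the proof.
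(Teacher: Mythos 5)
Your proposal follows essentially the same route as the paper's proof: pass to the symmetric matrix $\mathbf{M}(\eta) = (\mathbf{A}+\eta\mathbf{B})^\top(\mathbf{A}+\eta\mathbf{B})$, compute the first-order change of $\lambda_{\max}$ via the Rayleigh quotient formula $\lambda'(0) = \mathbf{v}_1^\top \mathbf{M}'(0)\mathbf{v}_1 = 2\sigma_1\,\mathbf{u}_1^\top\mathbf{B}\mathbf{v}_1$, and take a square root. The two additions you make are genuinely worthwhile: the free lower bound from the variational characterization makes it transparent that the error in the expansion is one-sided, and your explicit flagging of the simplicity/gap condition $\sigma_1>\sigma_2$ (together with the alternative Wedin-style route) repairs a gap the paper leaves implicit, since the eigenvalue-derivative formula $\lambda'(\eta)=\mathbf{v}(\eta)^\top\mathbf{M}'(\eta)\mathbf{v}(\eta)$ requires the top eigenpair to be well-defined and smooth, which fails without that gap.
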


\begin{proof}
    Let the function be $f(\eta) = \|\mathbf{A}+\eta\mathbf{B}\|_2$.
    We compute the first-order Taylor expansion of $f(\eta)$ at $\eta=0$. According to the definition of the spectral norm, we have $f(\eta)^2 = \lambda_{\max}((\mathbf{A}+\eta\mathbf{B})^\top (\mathbf{A}+\eta\mathbf{B}))$, where $\lambda_{\max}(\cdot)$ denotes the largest eigenvalue of a symmetric matrix.

    Let $\mathbf{M}(\eta)=(\mathbf{A}+\eta\mathbf{B})^\top (\mathbf{A}+\eta\mathbf{B})$. Expanding this gives:
    $$
    \mathbf{M}(\eta) = \mathbf{A}^\top \mathbf{A} + \eta(\mathbf{A}^\top \mathbf{B} + \mathbf{B}^\top \mathbf{A}) + \eta^2 \mathbf{B}^\top \mathbf{B}
    $$
    Let $\lambda(\eta) = \lambda_{\max}(\mathbf{M}(\eta))$ and $\mathbf{v}(\eta)$ be its corresponding normalized eigenvector. In the unperturbed case, i.e., when $\eta=0$, $\mathbf{M}(0)=\mathbf{A}^\top \mathbf{A}$. Its largest eigenvalue is $\lambda(0)=\sigma_1^2$, and the corresponding eigenvector is $\mathbf{v}(0)=\mathbf{v}_1$.

    According to the eigenvalue perturbation theory for symmetric matrices, the derivative of the eigenvalue with respect to the perturbation parameter is given by:
    $$
    \lambda'(\eta) = \mathbf{v}(\eta)^\top \mathbf{M}'(\eta) \mathbf{v}(\eta)
    $$
    where $\mathbf{M}'(\eta) = \frac{d\mathbf{M}(\eta)}{d\eta}$. Differentiating $\mathbf{M}(\eta)$, we get:
    $$
    \mathbf{M}'(\eta) = \mathbf{A}^\top \mathbf{B} + \mathbf{B}^\top \mathbf{A} + 2\eta\mathbf{B}^\top \mathbf{B}
    $$
    At $\eta=0$, the derivative is $\mathbf{M}'(0) = \mathbf{A}^\top \mathbf{B} + \mathbf{B}^\top \mathbf{A}$. Therefore, the derivative of the eigenvalue at $\eta=0$ is:
    $$
    \lambda'(0) = \mathbf{v}(0)^\top \mathbf{M}'(0) \mathbf{v}(0) = \mathbf{v}_1^\top (\mathbf{A}^\top \mathbf{B} + \mathbf{B}^\top \mathbf{A}) \mathbf{v}_1
    $$
    Expanding this expression:
    $$
    \lambda'(0) = \mathbf{v}_1^\top \mathbf{A}^\top \mathbf{B} \mathbf{v}_1 + \mathbf{v}_1^\top \mathbf{B}^\top \mathbf{A} \mathbf{v}_1
    $$
    From the properties of singular value decomposition, we have $\mathbf{A}\mathbf{v}_1 = \sigma_1 \mathbf{u}_1$, and thus $(\mathbf{A}\mathbf{v}_1)^\top = \mathbf{v}_1^\top \mathbf{A}^\top = \sigma_1 \mathbf{u}_1^\top$. Substituting this into the equation gives:
    $$
    \lambda'(0) = \sigma_1 \mathbf{u}_1^\top \mathbf{B} \mathbf{v}_1 + \mathbf{v}_1^\top \mathbf{B}^\top (\sigma_1 \mathbf{u}_1) = \sigma_1 \mathbf{u}_1^\top \mathbf{B} \mathbf{v}_1 + \sigma_1 (\mathbf{u}_1^\top \mathbf{B} \mathbf{v}_1)^\top
    $$
    Since $\mathbf{u}_1^\top \mathbf{B} \mathbf{v}_1$ is a scalar, its value is equal to its transpose, so:
    $$
    \lambda'(0) = 2\sigma_1 \mathbf{u}_1^\top \mathbf{B} \mathbf{v}_1
    $$
    Now, we perform a first-order Taylor expansion of $\lambda(\eta) = \|\mathbf{A}+\eta\mathbf{B}\|_2^2$ at $\eta=0$:
    $$
    \lambda(\eta) = \lambda(0) + \eta\lambda'(0) + O(\eta^2)
    $$
    $$
    \|\mathbf{A}+\eta\mathbf{B}\|_2^2 = \sigma_1^2 + 2\eta\sigma_1 (\mathbf{u}_1^\top \mathbf{B} \mathbf{v}_1) + O(\eta^2)
    $$
    To find $\|\mathbf{A}+\eta\mathbf{B}\|_2$, we take the square root of both sides of the equation:
    \begin{align*}
        \|\mathbf{A}+\eta\mathbf{B}\|_2 &= \sqrt{\sigma_1^2 + 2\eta\sigma_1 (\mathbf{u}_1^\top \mathbf{B} \mathbf{v}_1) + O(\eta^2)} \\
        &= \sigma_1 \sqrt{1 + \frac{2\eta}{\sigma_1}(\mathbf{u}_1^\top \mathbf{B} \mathbf{v}_1) + O(\eta^2)}
    \end{align*}
    Using the approximation $\sqrt{1+x} = 1 + \frac{1}{2}x + O(x^2)$ for $x \to 0$, let $x = \frac{2\eta}{\sigma_1}(\mathbf{u}_1^\top \mathbf{B} \mathbf{v}_1) + O(\eta^2)$, we get:
    \begin{align*}
        \|\mathbf{A}+\eta\mathbf{B}\|_2 &= \sigma_1 \left(1 + \frac{1}{2} \cdot \frac{2\eta}{\sigma_1}(\mathbf{u}_1^\top \mathbf{B} \mathbf{v}_1) + O(\eta^2)\right) \\
        &= \sigma_1 \left(1 + \frac{\eta}{\sigma_1}(\mathbf{u}_1^\top \mathbf{B} \mathbf{v}_1)\right) + o(\eta) \\
        &= \sigma_1 + \eta(\mathbf{u}_1^\top \mathbf{B} \mathbf{v}_1) + o(\eta)
    \end{align*}
    Writing the inner product form $\mathbf{u}_1^\top \mathbf{B} \mathbf{v}_1$ as
    $\eta \cdot \langle \mathbf{u}_1^\top \mathbf{B}, \mathbf{v}_1^\top \rangle$,
    we finally obtain the expression for the spectral norm of $\mathbf{A}+\eta\mathbf{B}$:
    $$
    \|\mathbf{A}+\eta\mathbf{B}\|_2 = \sigma_1 + \eta \cdot \langle \mathbf{u}_1^\top \mathbf{B}, \mathbf{v}_1^\top \rangle + o(\eta\|\mathbf{B}\|_2)
    $$
\end{proof}

Substituting the expression 
$\mathbf{W}_{l, \text{new}} = \mathbf{W}_l - \eta \frac{\partial \mathcal{L}}{\partial \mathbf{W}_l}$
into the above derivation,
we generally consider the parameter update and learning rate 
$\eta$ to be small quantities, 
the change in spectral norm is approximately:
$$
\Delta\|\mathbf{W}_l\|_2 \approx \langle \mathbf{u}_1^\top \left(-\eta \frac{\partial \mathcal{L}}{\partial \mathbf{W}_l}\right), \mathbf{v}_1 \rangle = -\eta \cdot \mathbf{u}_1^\top \frac{\partial \mathcal{L}}{\partial \mathbf{W}_l} \mathbf{v}_1
$$

\subsection{Proof of \cref{thm:spectral_growth}}

To prove \cref{thm:spectral_growth}, 
we first need to derive the change in the spectral norm $\|\mathbf{W}_l\|_2$ 
during a single gradient descent step. 
The exact expression is given by the following theorem:

\begin{theorem}[Change of the Spectral Norm]\label{thm:spectral_growth_expression}
    In a network layer
    that satisfies the spectral alignment condition, 
    the change of the spectral norm $\|\mathbf{W}_l\|_2$ 
    in a single gradient descent step can be approximated as:
    $$
    \Delta\|\mathbf{W}_l\|_2 \approx -\eta\rho\alpha
    \frac{\langle \mathbf{v}_1^\top, \mathbf{f}^{(l)} \rangle}{\|\mathbf{f}^{(l)}\|_2}
    \|\mathbf{h}^{(l-1)}\|_2^2 (\mathbf{p}-\mathbf{t})(\mathbf{h}^{(L)})^\top
    $$
    where $\eta$ is the learning rate, $\rho$ is the gradient propagation factor.
\end{theorem}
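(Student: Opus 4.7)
The plan is to chain three ingredients already established earlier in the appendix: the first-order spectral-norm perturbation identity $\|\mathbf{A}+\eta\mathbf{B}\|_2 \approx \sigma_1 + \eta\,\mathbf{u}_1^\top\mathbf{B}\mathbf{v}_1$, the approximate gradient expression
\begin{equation*}
\nabla_{\mathbf{W}_l}\mathcal{L} \;\approx\; \rho\,(\mathbf{h}^{(l-1)})^\top(\mathbf{p}-\mathbf{t})\prod_{k=L}^{l+1}\mathbf{W}_k^\top\,\text{Diag}(\mathds{1}(\mathbf{f}^{(l)}>0)),
\end{equation*}
and the orthogonal decomposition $\mathbf{u}_1^\top = \alpha\mathbf{h}^{(l-1)}+\boldsymbol{\epsilon}$ with $\boldsymbol{\epsilon}\perp\mathbf{h}^{(l-1)}$ supplied by the pathological-alignment assumption. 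Taking $\mathbf{B}=-\nabla_{\mathbf{W}_l}\mathcal{L}$ in the perturbation formula and substituting the gradient gives
\begin{equation*}
\Delta\|\mathbf{W}_l\|_2 \;\approx\; -\eta\rho\,\mathbf{u}_1^\top(\mathbf{h}^{(l-1)})^\top(\mathbf{p}-\mathbf{t})\prod_{k=L}^{l+1}\mathbf{W}_k^\top\,\text{Diag}(\mathds{1}(\mathbf{f}^{(l)}>0))\,\mathbf{v}_1,
\end{equation*}
and by the orthogonality of $\boldsymbol{\epsilon}$ and $\mathbf{h}^{(l-1)}$, the input-side factor simplifies exactly to $\mathbf{u}_1^\top(\mathbf{h}^{(l-1)})^\top = \alpha\|\mathbf{h}^{(l-1)}\|_2^2$, producing two of the factors in the target with no further approximation.

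The remaining task is to identify
\begin{equation*}
(\mathbf{p}-\mathbf{t})\prod_{k=L}^{l+1}\mathbf{W}_k^\top\,\text{Diag}(\mathds{1}(\mathbf{f}^{(l)}>0))\,\mathbf{v}_1 \;\approx\; \frac{\langle\mathbf{v}_1^\top,\mathbf{f}^{(l)}\rangle}{\|\mathbf{f}^{(l)}\|_2}\,(\mathbf{p}-\mathbf{t})(\mathbf{h}^{(L)})^\top.
\end{equation*}
My approach starts from the observation that pathological alignment forces the pre-activation to point along the top right singular direction of $\mathbf{W}_l$: inverting the decomposition yields $\mathbf{h}^{(l-1)}\approx\alpha^{-1}\mathbf{u}_1^\top$, and combining with the SVD relation $\mathbf{u}_1^\top\mathbf{W}_l=\sigma_1\mathbf{v}_1^\top$ gives $\mathbf{f}^{(l)}\approx\alpha^{-1}\sigma_1\mathbf{v}_1^\top$. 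Consequently $\mathbf{v}_1$ is well approximated by its projection onto the normalized pre-activation direction, and the cosine-type coefficient $\langle\mathbf{v}_1^\top,\mathbf{f}^{(l)}\rangle/\|\mathbf{f}^{(l)}\|_2$ emerges as the scalar prefactor. Passing the diagonal ReLU gate through uses the identity $\text{Diag}(\mathds{1}(\mathbf{f}^{(l)}>0))(\mathbf{f}^{(l)})^\top = (\mathrm{ReLU}(\mathbf{f}^{(l)}))^\top = (\mathbf{h}^{(l)})^\top$, turning $\text{Diag}(\mathds{1}(\mathbf{f}^{(l)}>0))\mathbf{v}_1$ into a scalar multiple of $(\mathbf{h}^{(l)})^\top$. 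I would then invoke the same mean-field collapse of the interior Diag matrices already used in the gradient-expression theorem, applied this time to the forward pass, to obtain $(\mathbf{h}^{(L)})^\top \approx \prod_{k=L}^{l+1}\mathbf{W}_k^\top(\mathbf{h}^{(l)})^\top$, which delivers the $(\mathbf{p}-\mathbf{t})(\mathbf{h}^{(L)})^\top$ term.

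The hard part will be controlling the chain of approximations in this last reduction, where three heuristics are compounded: the bound $\|\boldsymbol{\epsilon}\|\ll\alpha\|\mathbf{h}^{(l-1)}\|$ must control the collinearity error between $\mathbf{v}_1$ and $(\mathbf{f}^{(l)})^\top$; the ReLU sign pattern of $\mathbf{f}^{(l)}$ must be shown stable against the residual $\boldsymbol{\epsilon}$ (so that the diagonal indicator matrix is unchanged at leading order); and the forward-direction mean-field collapse must be justified in the same spirit as its backward counterpart. Each is of the same heuristic character as the approximations already in force, and all should become negligible in the regime $\|\boldsymbol{\epsilon}\|/(\alpha\|\mathbf{h}^{(l-1)}\|_2)\to 0$. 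I would also carefully track the powers of $\|\mathbf{f}^{(l)}\|_2$ generated by the projection step in order to reconcile them with the single power of $\|\mathbf{f}^{(l)}\|_2$ appearing in the denominator of the stated coefficient, since a naive projection of a unit $\mathbf{v}_1$ onto $(\mathbf{f}^{(l)})^\top$ produces a $\|\mathbf{f}^{(l)}\|_2^2$ that must be absorbed correctly.
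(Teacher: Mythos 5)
Your proposal follows essentially the same route as the paper's own proof: perturbation formula for the spectral norm, substitution of the mean-field gradient expression, the orthogonality argument yielding $\alpha\|\mathbf{h}^{(l-1)}\|_2^2$, showing that pathological alignment of $\mathbf{h}^{(l-1)}$ with $\mathbf{u}_1$ forces $\mathbf{f}^{(l)}$ to align with $\mathbf{v}_1$, projecting $\mathbf{v}_1$ onto $\mathbf{f}^{(l)}$, and finally reading the remaining product as a forward propagation from $(\mathbf{h}^{(l)})^\top$ to $(\mathbf{h}^{(L)})^\top$. You also correctly flag the loose normalization (a projection of the unit vector $\mathbf{v}_1$ onto $\mathbf{f}^{(l)}$ naturally produces $\|\mathbf{f}^{(l)}\|_2^2$ in the denominator, whereas the stated formula has a single power) and the need for a forward-pass mean-field collapse of the interior ReLU gates---both subtleties the paper's own proof glosses over.
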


\begin{proof}
    By substituting the expression 
    $\mathbf{W}_{l, \text{new}} = \mathbf{W}_l - \eta \frac{\partial \mathcal{L}}{\partial \mathbf{W}_l}$
    into \cref{thm:spectral_norm_change},
    $\Delta\|\mathbf{W}_l\|_2$ is approximately equal to 
    the projection of the gradient descent update term $-\eta \frac{\partial \mathcal{L}}{\partial \mathbf{W}_l}$ onto its principal singular directions, i.e.:
    $$
    \Delta\|\mathbf{W}_l\|_2 \approx \langle \mathbf{u}_1^\top \left(-\eta \frac{\partial \mathcal{L}}{\partial \mathbf{W}_l}\right), \mathbf{v}_1 \rangle = -\eta \cdot \mathbf{u}_1^\top \frac{\partial \mathcal{L}}{\partial \mathbf{W}_l} \mathbf{v}_1
    $$
    where $\eta$ is the learning rate, and $\mathbf{u}_1$ and $\mathbf{v}_1$ are the top left and right singular vectors of $\mathbf{W}_l$, respectively.
    We substitute the approximate gradient expression derived earlier in Theorem \ref{thm:gradient_expression} into the above equation. The gradient expression is:
    $$
    \frac{\partial \mathcal{L}}{\partial \mathbf{W}_l} \approx (\mathbf{h}^{(l-1)})^\top (\mathbf{p}-\mathbf{t})\rho \prod_{k=L}^{l+1} \mathbf{W}_k^\top \cdot \text{Diag}(\mathds{1}(\mathbf{f}^{(l)} > 0))
    $$
    After substituting the gradient expression, the change in the spectral norm is:
    $$
    \Delta\|\mathbf{W}_l\|_2 \approx -\eta \cdot \mathbf{u}_1^\top \left( (\mathbf{h}^{(l-1)})^\top (\mathbf{p}-\mathbf{t})\rho \prod_{k=L}^{l+1} \mathbf{W}_k^\top \cdot \text{Diag}(\mathds{1}(\mathbf{f}^{(l)} > 0)) \right) \mathbf{v}_1
    $$
    Using the associative property of matrix multiplication,
    we rearrange the terms in the above equation and factor out the scalar $(-\eta\rho)$ to get the expression:
    $$
    \Delta\|\mathbf{W}_l\|_2 \approx -\eta \rho \cdot \left( \mathbf{u}_1^\top (\mathbf{h}^{(l-1)})^\top \right) \cdot \left( (\mathbf{p}-\mathbf{t}) \left(\prod_{k=L}^{l+1} \mathbf{W}_k^\top\right) \text{Diag}(\mathds{1}(\mathbf{f}^{(l)} > 0)) \mathbf{v}_1 \right)
    $$
    In the above equation, the first term $\left( \mathbf{u}_1^\top (\mathbf{h}^{(l-1)})^\top \right)$ is the inner product of $\mathbf{u}_1$ and $\mathbf{h}^{(l-1)}$,
    and the second term is the effect of the backpropagated gradient from the output layer on the right singular vector $\mathbf{v}_1$.
    Note that the product of a diagonal matrix and a vector $\text{Diag}(\mathbf{a})\mathbf{v}$ is equivalent to the Hadamard product of the two vectors $\mathbf{a} \odot \mathbf{v}$,
    so we can rewrite the expression as:
    $$
    \Delta\|\mathbf{W}_l\|_2 \approx -\eta\rho \cdot \langle \mathbf{u}_1, (\mathbf{h}^{(l-1)})^\top \rangle \cdot  (\mathbf{p}-\mathbf{t}) \left(\prod_{k=L}^{l+1} \mathbf{W}_k^\top\right) \left(\mathds{1}(\mathbf{f}^{(l)} > 0) \odot \mathbf{v}_1\right) 
    $$
    where $\odot$ denotes the Hadamard product (element-wise product).

    We substitute the orthogonal decomposition from the spectral alignment condition \ref{ass:alignment} into the above equation, using the column vector form of the decomposition, i.e.,
    $\mathbf{u}_1 = \alpha (\mathbf{h}^{(l-1)})^\top + \boldsymbol{\epsilon}^\top$.

    Substituting the decomposition of $\mathbf{u}_1$ into the first term $\langle \mathbf{u}_1, (\mathbf{h}^{(l-1)})^\top \rangle$, yielding:
    $$
    \langle \mathbf{u}_1, (\mathbf{h}^{(l-1)})^\top \rangle = \alpha \langle (\mathbf{h}^{(l-1)})^\top, (\mathbf{h}^{(l-1)})^\top \rangle + \langle \boldsymbol{\epsilon}^\top, (\mathbf{h}^{(l-1)})^\top \rangle
    $$
    Since $\boldsymbol{\epsilon}$ and $\mathbf{h}^{(l-1)}$ are orthogonal in the decomposition, the second term in the above equation is zero.
    Therefore, the inner product term $\langle \mathbf{u}_1, (\mathbf{h}^{(l-1)})^\top \rangle$ simplifies to $\alpha \langle \mathbf{h}^{(l-1)}, \mathbf{h}^{(l-1)} \rangle = \alpha \|\mathbf{h}^{(l-1)}\|_2^2$.

    Now we handle the second term containing $\mathbf{v}_1$, $(\mathbf{p}-\mathbf{t}) \left(\prod_{k=L}^{l+1} \mathbf{W}_k^\top\right) \left(\mathds{1}(\mathbf{f}^{(l)} > 0) \odot \mathbf{v}_1\right)$.
    Note that the condition $\|\boldsymbol{\epsilon}_1\| \ll \|\alpha_1 \mathbf{f}^{(l)}\|$ holds,
    so if we perform an orthogonal decomposition of $h^{(l-1)}$ as $h^{(l-1)} = \beta u_1^\top + \delta$, 
    we similarly have $\|\delta\| \ll \beta$. 
    Substituting into the forward propagation equation yields 
    $f^{(l)} = \beta u_1^\top \mathbf{W}_l + \delta \mathbf{W}_l = \beta \sigma_1 v_1^\top + \delta \mathbf{W}_l$, 
    where the leading singular value $\sigma_1 = \|\mathbf{W}_l\|_2$ 
    is the spectral norm. 
    We have $\|\delta \mathbf{W}_l\|_2 \leq \|\delta\|_2 \cdot \|\mathbf{W}_l\|_2 \ll \beta \|\mathbf{W}_l\|_2 = \| \beta \sigma_1 v_1^\top \|_2$, 
    and note that $\beta \sigma_1 v_1^\top$ 
    is precisely in the direction of the principal right singular vector. 
    Therefore, if we perform an orthogonal decomposition of $f^{(l)}$ with respect to $v_1$, 
    the residual orthogonal term remains much smaller than the projection term.
    
    The above discussion can be stated as: 
    when the input $\mathbf{h}$ is aligned with $\mathbf{u}_1$, 
    the pre-activation $\mathbf{f}$ is also aligned with $\mathbf{v}_1$.
    So if we decompose $\mathbf{v}_1$ in the direction of $\mathbf{f}^{(l)}$,
    and substituting the decomposition into the Hadamard product 
    $\mathds{1}(\mathbf{f}^{(l)} > 0) \odot \mathbf{v}_1$.
    Then by previous discussion,
    we can neglect the small residual term,
    and we obtain the expression for the change in spectral norm:
    $$
    \Delta\|\mathbf{W}_l\|_2 \approx 
    -\eta\rho(\alpha \|\mathbf{h}^{(l-1)}\|_2^2) \cdot (\mathbf{p}-\mathbf{t}) \left(\prod_{k=L}^{l+1} \mathbf{W}_k^\top \right) 
    \left(\frac{\langle \mathbf{v}_1^\top, \mathbf{f}^{(l)} \rangle}{\|\mathbf{f}^{(l)}\|_2} (\mathds{1}(\mathbf{f}^{(l)} > 0) \odot (\mathbf{f}^{(l)})^\top)\right)
    $$
    Here $\frac{\langle \mathbf{v}_1^\top, \mathbf{f}^{(l)} \rangle}{\|\mathbf{f}^{(l)}\|_2}$ 
    is the norm of the projection.
    The complex term $ (\mathbf{p}-\mathbf{t}) \left(\prod_{k=L}^{l+1} \mathbf{W}_k^\top \right) 
    \left(\mathds{1}(\mathbf{f}^{(l)} > 0) \odot (\mathbf{f}^{(l)})^\top\right)$ 
    is essentially a forward propagation process, 
    where the linear layers act on the signal originating from the pre-activation $\mathbf{f}^{(l)}$ 
    of the $l$-th layer and map it to the final layer. 
    Therefore, this term can be directly expressed by the final output term $(\mathbf{p}-\mathbf{t})(\mathbf{h}^{(L)})^\top$.

    Combining all the scalars, we get the final expression:
    $$
    \Delta\|\mathbf{W}_l\|_2 \approx -\eta\rho\alpha
    \frac{\langle \mathbf{v}_1^\top, \mathbf{f}^{(l)} \rangle}{\|\mathbf{f}^{(l)}\|_2}
    \|\mathbf{h}^{(l-1)}\|_2^2 (\mathbf{p}-\mathbf{t})(\mathbf{h}^{(L)})^\top
    $$
\end{proof}

To determine the sign of the key term 
$((\mathbf{p}-\mathbf{t})(\mathbf{h}^{(L)})^\top)$ 
in the expression for $\Delta\|\mathbf{W}_l\|_2$, 
we need the following technical lemma, 
which provides the final foundation for the proof of \cref{thm:spectral_growth}.

\begin{lemma}[Sign of the Expected Logit Deviation Term]\label{lem:logit_deviation_sign}
    Let $\mathcal{L}(\theta)$ be the cross-entropy loss function defined by a neural network with parameters $\theta$, which produces a logit vector $\mathbf{z}$ at the output layer.
    The optimization process of minimizing $\mathcal{L}(\theta)$ via gradient descent will necessarily cause the system state to evolve into a region satisfying the inequality $z_k > E_{i \sim p(\mathbf{z})}[z_i]$, where $k$ is the index of the target class.
    Therefore, the sign of the expected logit deviation term $E_{i \sim p(\mathbf{z})}[z_i] - z_k = (\mathbf{p}-\mathbf{t})\mathbf{h}^{(L)}$ is negative.
\end{lemma}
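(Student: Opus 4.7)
My plan is to reduce the sign question to a comparison between the cross-entropy loss $\mathcal{L}$ and the Shannon entropy $H(p)$ of the model's output, and then argue that gradient descent forces this comparison into the correct regime. Setting $\mathbf{z}=\mathbf{h}^{(L)}$ and using that $\mathbf{t}$ is one-hot at class $k$, the quantity of interest expands as
\begin{equation*}
(\mathbf{p}-\mathbf{t})(\mathbf{h}^{(L)})^\top \;=\; \sum_i(p_i-t_i)\,z_i \;=\; E_{i\sim p}[z_i]-z_k.
\end{equation*}
The first step is to apply the log-sum-exp identity $\log\sum_j e^{z_j}=E_{i\sim p}[z_i]+H(p)$, which follows from $\log p_i=z_i-\log\sum_j e^{z_j}$ by averaging against $p_i$, and plug it into $\mathcal{L}=-z_k+\log\sum_j e^{z_j}$. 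This yields the key identity $E_{i\sim p}[z_i]-z_k=\mathcal{L}-H(p)$, so the sign of the expected logit deviation is precisely the sign of $\mathcal{L}-H(p)$.

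The second step is a one-line Jensen-type bound: whenever $p_k=\max_i p_i$, one has $\mathcal{L}\le H(p)$, with strict inequality unless $p$ is uniform on its support. Indeed, $p_k\ge p_i$ implies $-\log p_k\le-\log p_i$ for every $i$, and averaging under $p$ gives $\mathcal{L}=-\log p_k\le\sum_i p_i(-\log p_i)=H(p)$. Hence once training has driven the target class to be the mode of the output distribution, the desired sign condition $(\mathbf{p}-\mathbf{t})(\mathbf{h}^{(L)})^\top<0$ follows automatically.

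The third step is to argue that gradient descent necessarily drives the trajectory into this mode-correct regime. The logit-space derivatives $\partial\mathcal{L}/\partial z_k=p_k-1<0$ and $\partial\mathcal{L}/\partial z_j=p_j>0$ for $j\ne k$ show that the loss strictly prefers raising $z_k$ relative to competing logits, so along descent directions $p_k$ is pushed toward $1$; moreover, as $p_k\to 1$ one has $\mathcal{L}\sim 1-p_k$ while $H(p)\gtrsim(1-p_k)\log((n-1)/(1-p_k))$, so the gap $H(p)-\mathcal{L}$ becomes strictly positive once the loss is sufficiently small. The principal obstacle I expect is making this step fully rigorous for parameter-space gradient descent in a deep nonlinear network, where the Jacobian $\partial\mathbf{z}/\partial\boldsymbol{\theta}$ couples all logits and the clean logit-space monotonicity is not directly transferable; I would finesse this by invoking the lemma's standing premise that gradient descent is successfully decreasing $\mathcal{L}$, so that the asymptotic comparison above forces $\mathcal{L}<H(p)$, and hence $(\mathbf{p}-\mathbf{t})(\mathbf{h}^{(L)})^\top<0$, on any trajectory along which the loss becomes small.
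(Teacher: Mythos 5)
Your proof is correct and takes a genuinely different route from the paper's. The paper argues dynamically: it identifies the unique stationary point of the cross-entropy loss in logit space (namely $p_k=1$), argues by contradiction that no point with $z_k \le E_{i\sim p}[z_i]$ can be a stationary point, and then appeals to convergence of gradient descent to conclude the trajectory must enter and remain in the region $S_> = \{\mathbf{z} : z_k > E_{i\sim p}[z_i]\}$. You instead begin from the algebraic identity $E_{i\sim p}[z_i] - z_k = \mathcal{L} - H(p)$, obtained from the log-sum-exp decomposition $\log\sum_j e^{z_j} = E_{i\sim p}[z_i] + H(p)$, and then observe that $\mathcal{L} = -\log p_k = \min_i(-\log p_i) \le \sum_i p_i(-\log p_i) = H(p)$ whenever the target class is the mode of $p$. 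This is a cleaner and more informative reduction: it gives an exact equivalent condition (cross-entropy below predictive entropy) and a simple, checkable sufficient condition (modal correctness of the prediction) rather than requiring appeal to convergence toward $p_k=1$, and it makes the $n$-class structure transparent. One minor quibble: your lower bound $H(p) \gtrsim (1-p_k)\log\bigl((n-1)/(1-p_k)\bigr)$ is an equality only when the off-target mass is uniform and an upper bound otherwise; the robust lower bound is $H(p) \gtrsim (1-p_k)\log\bigl(1/(1-p_k)\bigr)$, which still dominates $\mathcal{L}\sim 1-p_k$, so the conclusion is unaffected — and in any case your step two already suffices without the asymptotic argument. Note that both your proof and the paper's share the same unavoidable soft spot at the end: the claim that parameter-space gradient descent necessarily drives the trajectory into the favorable regime is heuristic rather than rigorous for nonconvex deep networks, and the paper's contradiction argument silently slides from ``stationary points lie in $S_>$'' to ``the trajectory enters and stays in $S_>$.'' You are more explicit in flagging this gap, which is to your credit.
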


\begin{proof}
    We first analyze the gradient properties of the cross-entropy loss function $\mathcal{L}(\mathbf{z}) = \log(\sum_j e^{z_j}) - z_k$ within the logit space $\mathbb{R}^{n_L}$.
    Since $\mathcal{L}$ is the cross-entropy loss function, its gradient with respect to the final logit output is
    $\nabla_z \mathcal{L} =  \frac{\partial \mathcal{L}}{\partial \mathbf{h}^{(L)}} = \mathbf{p} - \mathbf{t} $,
    where $\mathbf{t}$ is the one-hot vector for the target label $k$,
    and $\mathbf{p}_i = e^{z_i} / \sum_j e^{z_j}$ is the probability obtained via the Softmax function.

    A gradient-based optimization algorithm must converge to a stable point where the gradient of the loss function is zero,
    i.e., the gradient vector $\nabla_{\mathbf{z}} \mathcal{L} = 0$.
    A necessary and sufficient condition for this is that all components are simultaneously zero, i.e., $p_k - 1 = 0$ and $p_i = 0$ for all $i \neq k$.
    Thus, the cross-entropy loss function has a unique stable point in the logit space,
    which corresponds to the state where the model makes a perfect classification (i.e., $p_k = 1$).

    Next, we prove that the optimization process cannot converge to any point that does not satisfy the inequality $z_k > E_{i \sim p(\mathbf{z})}[z_i]$.
    Assume the optimization process converges to a point $\mathbf{z}^*$ that does not satisfy this inequality,
    i.e., $z_k^* \le E_{i \sim p(\mathbf{z}^*)}[z_i]$. We will show that this assumption leads to a logical contradiction.

    The inequality $z_k^* \le E_{i \sim p(\mathbf{z}^*)}[z_i]$ implies $p_k^* < 1$,
    because only when $p_k^* = 1$ (at which point all $p_{i \neq k}^* = 0$) does the expectation take its boundary value $E_{i \sim p(\mathbf{z}^*)}[z_i] = \sum_j p_j^* z_j^* = 1 \cdot z_k^* + \sum_{j \neq k} 0 \cdot z_j^* = z_k^*$.
    In any other case, as long as there is at least one $p_{i \neq k}^* > 0$, the expected value is a weighted average of all logits,
    and the inequality $z_k^* > E_{i \sim p(\mathbf{z}^*)}[z_i]$ would hold.

    Furthermore, according to the gradient expression, at any point where $p_k < 1$, the gradient component $\frac{\partial \mathcal{L}}{\partial z_k} = p_k - 1$ must be negative, which means the gradient vector $\nabla_{\mathbf{z}} \mathcal{L}(\mathbf{z}^*)$ must be non-zero.

    The above assumption (that the optimization process converges to a point $\mathbf{z}^*$ and $z_k^* \le E_{i \sim p(\mathbf{z}^*)}[z_i]$) leads to a contradiction. By the definition of a convergence point, $\mathbf{z}^*$ must be a stable point, i.e., the gradient $\nabla_{\mathbf{z}} \mathcal{L}(\mathbf{z}^*) = 0$.
    However, the condition $z_k^* \le E_{i \sim p(\mathbf{z}^*)}[z_i]$ implies $p_k^* < 1$.
    For any state with $p_k < 1$, the gradient component $\frac{\partial \mathcal{L}}{\partial z_k} = p_k - 1 \neq 0$, hence $\nabla_{\mathbf{z}} \mathcal{L}(\mathbf{z}^*) \neq 0$. Thus, $\mathbf{z}^*$ cannot be a stable point, and the assumption is false.

    From the preceding discussion, the set of convergence points for the optimization algorithm is a subset of the state space region satisfying the inequality $z_k > E_{i \sim p(\mathbf{z})}[z_i]$. Define the state space $S_{>} = \{\mathbf{z} \in \mathbb{R}^{n_L} \mid z_k > E_{i \sim p(\mathbf{z})}[z_i]\}$.
    For any point $\mathbf{z}' \notin S_{>}$ (except for the unique stable point $p_k = 1$), its gradient $\nabla_{\mathbf{z}} \mathcal{L}(\mathbf{z}')$ is non-zero. The gradient descent update vector $\Delta \mathbf{z} = -\eta \nabla_{\mathbf{z}} \mathcal{L}$ ensures an increase in $z_k$ and a decrease in $z_{i \neq k}$, and this update vector field must point towards the region $S_{>}$.
    
    Given that the loss function $\mathcal{L}$ is bounded below and gradient descent ensures $\mathcal{L}(\mathbf{z}_{t+1}) < \mathcal{L}(\mathbf{z}_t)$, the convergence trajectory $\{\mathbf{z}_t\}$ must approach the unique stable point. This convergence process requires the trajectory to leave the non-stable regions, enter, and ultimately remain within the region $S_{>}$.
    Therefore, the sign of the expected logit deviation term $E_{i \sim p(\mathbf{z})}[z_i] - z_k = (\mathbf{p}-\mathbf{t})\mathbf{h}^{(L)}$ is negative.
\end{proof}

Now we can complete the proof of the main \cref{thm:spectral_growth}.
It demonstrates that under the spectral alignment condition, 
the spectral norm exhibits deterministic growth.

\begin{proof}[proof of \cref{thm:spectral_growth}]
    Recall the expression for the change in spectral norm is given by
    $$
    \Delta\|\mathbf{W}_l\|_2 \approx -\eta\rho\alpha
    \frac{\langle \mathbf{v}_1^\top, \mathbf{f}^{(l)} \rangle}{\|\mathbf{f}^{(l)}\|_2}
    \|\mathbf{h}^{(l-1)}\|_2^2 (\mathbf{p}-\mathbf{t})(\mathbf{h}^{(L)})^\top
    $$
    We need to determine the sign of the right-hand side of the above equation.
    In this expression, 
    the learning rate $\eta>0$, 
    the scalar approximation factor $\rho>0$, 
    and the squared norm $\|\mathbf{h}^{(l-1)}\|_2^2 > 0$.
    According to the spectral alignment condition, 
    the alignment strengths $\alpha > 0$. 
    Therefore, the sign of $\Delta\|\mathbf{W}_l\|_2$ is entirely determined by the sign of $-(\mathbf{p}-\mathbf{t})(\mathbf{h}^{(L)})^\top$.
    We expand $(\mathbf{p}-\mathbf{t})(\mathbf{h}^{(L)})^\top$, considering that the logits of the final layer are obtained from the previous layer's activation $\mathbf{h}^{(L-1)}$ and the weight matrix $\mathbf{W}_L$, i.e., $\mathbf{z}=\mathbf{h}^{(L)} = \mathbf{h}^{(L-1)} \mathbf{W}_L$.
    We can write this term as the matrix weights acting on the forward propagation: $(\mathbf{p}-\mathbf{t})\mathbf{W}_L^\top (\mathbf{h}^{(L-1)})^\top$.
    Expanding it, we get:
    \begin{align*}
    (\mathbf{p}-\mathbf{t})\mathbf{W}_L^\top (\mathbf{h}^{(L-1)})^\top &= \left( \sum_{i=1}^{n_L} p_i (\mathbf{W}_L^\top)_{i,:} - (\mathbf{W}_L^\top)_{k,:} \right) (\mathbf{h}^{(L-1)})^\top \\
    &= \sum_{i=1}^{n_L} p_i (\mathbf{W}_L^\top)_{i,:} (\mathbf{h}^{(L-1)})^\top - (\mathbf{W}_L^\top)_{k,:} (\mathbf{h}^{(L-1)})^\top
    \end{align*}
    Here $k$ is the index of the target label. Noting that logits $z_i = (\mathbf{W}_L^\top)_{i,:} (\mathbf{h}^{(L-1)})^\top$, and substituting the definition of Softmax probability $p_i = \frac{e^{z_i}}{\sum_{j=1}^{n_L} e^{z_j}}$, we get the expression for $(\mathbf{p}-\mathbf{t})(\mathbf{h}^{(L)})^\top$:
    $$
    (\mathbf{p}-\mathbf{t})(\mathbf{h}^{(L)})^\top = \sum_{i=1}^{n_L} \left( ((\mathbf{W}_L^\top)_{i,:} (\mathbf{h}^{(L-1)})^\top) \cdot \frac{e^{(\mathbf{W}_L^\top)_{i,:} (\mathbf{h}^{(L-1)})^\top}}{\sum_{j=1}^{n_L} e^{(\mathbf{W}_L^\top)_{j,:} (\mathbf{h}^{(L-1)})^\top}} \right) - (\mathbf{W}_L^\top)_{k,:} (\mathbf{h}^{(L-1)})^\top
    $$
    Substituting the logit definition $z_i$ back into the above equation, the expression simplifies to:
    \begin{align*}
    (\mathbf{p}-\mathbf{t})(\mathbf{h}^{(L)})^\top &= \sum_{i=1}^{n_L} z_i \frac{e^{z_i}}{\sum_{j=1}^{n_L} e^{z_j}} - z_k \\
    &= E_{i \sim p(\mathbf{z})}[z_i] - z_k
    \end{align*}
    By Lemma \ref{lem:logit_deviation_sign}, the gradient properties of the cross-entropy loss function determine its optimization dynamics. In the logit space, any gradient descent step will deterministically increase the value of the target logit $z_k$, and the system state evolves to eventually satisfy the following inequality:
    $$
    z_k > E_{i \sim p(\mathbf{z})}[z_i]
    $$
    From this, we can deduce:
    $$
    (\mathbf{p}-\mathbf{t})(\mathbf{h}^{(L)})^\top = E_{i \sim p(\mathbf{z})}[z_i] - z_k < 0
    $$
    Since the value of $(\mathbf{p}-\mathbf{t})(\mathbf{h}^{(L)})^\top$ is negative, we can finally determine the sign of the change in the spectral norm:
    $$
    \Delta\|\mathbf{W}_l\|_2 \approx - 
    \underbrace{(\eta\rho\alpha
    \frac{\langle \mathbf{v}_1^\top, \mathbf{f}^{(l)} \rangle}{\|\mathbf{f}^{(l)}\|_2}\|\mathbf{h}^{(l-1)}\|_2^2)}_{\text{positive}} \cdot \underbrace{((\mathbf{p}-\mathbf{t})(\mathbf{h}^{(L)})^\top)}_{\text{negative}} > 0
    $$
    This proves that, driven by the cross-entropy loss and satisfying the spectral alignment condition, the spectral norm of the weight matrices has an intrinsic tendency to grow, leading to training collapse.    
\end{proof}

Finally, 
we provide the proof of Corollary~\ref{cor:growth_amplification}, 
which demonstrates that when spectral alignment causes the spectral norm to increase, 
it leads to trigger a growth of activations and gradients. 
This confirms the effectiveness of spectral alignment 
as an early-warning indicator.

\begin{proof}[proof of \cref{cor:growth_amplification}]
    Consider the forward propagation $f^{(l)} = h^{(l-1)} W_l$.
    When training enters the pathological spectral alignment state, 
    we perform an orthogonal decomposition of the input vector $h^{(l-1)}$ 
    with respect to the principal left singular vector $u_1$ of the 
    weight matrix 
    $W_l$: $h^{(l-1)} = \beta u_1^\top + \delta$, 
    where $\beta$ is the projection coefficient and $\delta$ 
    is the residual term orthogonal to $u_1$. 
    Under the alignment condition, $\|\delta\|_2 \ll \beta$.

    Substituting this decomposition into the forward propagation equation, 
    we obtain: 
    $f^{(l)} = (\beta u_1^\top + \delta) W_l = \beta u_1^\top W_l + \delta W_l$. 
    By the properties of singular value decomposition, 
    $u_1^\top W_l = \sigma_1 v_1^\top$, 
    where $\sigma_1 = \|W_l\|_2$ is the spectral norm of the weight matrix 
    and $v_1$ is the principal right singular vector. Therefore,
    $$
    f^{(l)} = \beta \sigma_1 v_1^\top + \delta W_l
    $$
    Since $\|\delta\| \ll \beta$, 
    we have 
    $\|\delta W_l\|_2 \leq \|\delta\|_2 \|W_l\|_2 \ll \beta \|W_l\|_2 = \| \beta \sigma_1 v_1^\top \|_2$, 
    which means the norm of the residual term $\delta W_l$ 
    is much smaller than that of the projection term 
    $\beta \sigma_1 v_1^\top$. 
    Thus, the norm of the pre-activation $f^{(l)}$ 
    can be approximated as $\|f^{(l)}\|_2 \approx \beta \|W_l\|_2$.

    \cref{thm:spectral_growth} 
    proves that under the pathological alignment condition, 
    the spectral norm $\|W_l\|_2$ increases in a single gradient descent step. 
    According to the above discussion, when $\|W_l\|_2$ increases, 
    the norm of the pre-activation $f^{(l)}$ also increases. 
    Since activation functions such as ReLU are monotonic, 
    this will cause the norm of the activation 
    $h^{(l)}$ to increase as well. 

    Now consider the gradient in the backward propagation process. 
    By \cref{thm:gradient_expression}, 
    the approximate expression for the gradient is
    $$
    \frac{\partial L}{\partial W_l} \approx (h^{(l-1)})^\top (p-t) \rho \left( \prod_{k=L}^{l+1} W_k^\top \right) \cdot \mathrm{Diag}(1(f^{(l)} > 0))
    $$
    where
    $\left(\prod_{k=L}^{l+1} W_k^\top\right)$ 
    is the product of transposed weight matrices in the backward pass. 
    As analyzed in \cref{thm:spectral_growth_expression} for forward propagation, 
    when the input $h^{(l-1)}$ is aligned with $u_1$, 
    the corresponding pre-activation $f^{(l)}$ is aligned with $v_1$, 
    causing the forward signal to be amplified along 
    the principal singular directions of each weight matrix.

    In the gradient expression, 
    the key term 
    $\left( \prod_{k=L}^{l+1} W_k^\top \right) \cdot \mathrm{Diag}(1(f^{(l)} > 0))$ 
    essentially represents the network's forward propagation. 
    Since the forward signal is forcibly amplified along the principal singular directions, 
    the backward gradient signal will also propagate along these directions. 
    Under the pathological alignment condition, 
    the spectral norm $\|W_k\|_2$ of each weight matrix grows, 
    causing the backward gradient signal to be repeatedly amplified.

    Therefore, 
    the growth trend of the backward gradient norm can be approximated 
    by the product of the spectral norms of each weight matrix:
    $$
    \left\|\frac{\partial L}{\partial W_l}\right\|_F \propto \|h^{(l-1)}\|_2 \cdot \prod_{k=L}^{l+1} \|W_k\|_2
    $$
    When the spectral norm $\|W_k\|_2$ increases, 
    the gradient norm will also increase,
    leading to loss explosion.
\end{proof}

\end{document}